\def\BState{\State\hskip-\ALG@thistlm}
\DeclareMathOperator\conv{ConvexHull} 
\DeclareMathOperator\dis{d} 
\newcommand{\R}{\mathbb{R}}
\newcommand{\defined}{:=}
\DeclareMathOperator*{\argmin}{argmin}
\newcommand{\E}{\mathbb{E}}
\newcommand{\Oh}{O}
\newcommand{\bigO}[1]{O \left ( #1 \right)}
\newcommand{\pr}[1]{\mathbb{P}\left( #1 \right)}
\newcommand{\Jup}{{J_{+}}}
\newcommand{\Juppow}[1]{{J_{+}^{#1}}}
\newcommand{\Jdown}{{J}}
\newcommand{\cdelta}{{c^*_\delta}}
\newcommand{\Zup}{{Z_{+}}}
\newcommand{\zup}{{z_{+}}}
\newcommand{\Zdown}{{Z}}
\newcommand{\err}{\epsilon}
\newcommand{\inprod}[2]{\left \langle #1 , #2 \right \rangle}
\def\Nystrom{Nystr\"om\xspace}
\def\JL{Johnson--Lindenstrauss\xspace}
\theoremstyle{plain}
\newtheorem{nthm}{Theorem}[section]
\newtheorem{nprop}[nthm]{Proposition}
\newtheorem{nlem}[nthm]{Lemma}
\newtheorem{ncor}[nthm]{Corollary}
\theoremstyle{definition}
\newtheorem{ndefn}[nthm]{Definition}
\newtheorem{nassum}[nthm]{Assumption}
\theoremstyle{remark}
\newtheorem*{rmk}{Remark}
\crefname{nlem}{Lemma}{Lemmas}
\crefname{nprop}{Proposition}{Propositions}
\crefname{ncor}{Corollary}{Corollaries}
\crefname{nthm}{Theorem}{Theorems}
\crefname{nexa}{Example}{Examples}
\crefname{ndefn}{Definition}{Definitions}
\crefname{nassum}{Assumption}{Assumptions}
\title{Data-dependent compression of random features \\ for large-scale kernel approximation}
\author{
Raj Agrawal \\
CSAIL \\
Massachusetts Institute of Technology \\
\texttt{r.agrawal@csail.mit.edu}
\And
Trevor Campbell \\
Department of Statistics \\
University of British Columbia  \\
\texttt{trevor@stat.ubc.ca}
\AND
Jonathan H.~Huggins \\
Department of Biostatistics \\
Harvard University \\
\texttt{jhuggins@mit.edu}
\And
Tamara Broderick \\
CSAIL \\
Massachusetts Institute of Technology \\
\texttt{tbroderick@csail.mit.edu}
}}
\begin{document}

\opt{arxiv}{\maketitle}
\opt{conf}{
\runningtitle{Data-dependent compression of random features \\ for large-scale kernel approximation}%

\twocolumn[

\aistatstitle{Scalable Gaussian Process Inference with \\ Finite-data Mean and Variance Guarantees}

\aistatsauthor{ Author 1 \And Author 2 \And  Author 3 }
\aistatsaddress{ Institution 1 \And  Institution 2 \And Institution 3 }

]
}

%

%






\begin{abstract}
  Kernel methods offer the flexibility to learn complex relationships in modern, large data sets while enjoying strong theoretical guarantees on quality. Unfortunately, these methods typically require cubic running time in the data set size, a prohibitive cost in the large-data setting. Random feature maps (RFMs) and the \Nystrom method both consider low-rank approximations to the kernel matrix as a potential solution. But, in order to achieve desirable theoretical guarantees, the former may require a prohibitively large number of features $\Jup$, and the latter may be prohibitively expensive for high-dimensional problems. We propose to combine the simplicity and generality of RFMs with a data-dependent feature selection scheme to achieve desirable theoretical approximation properties of \Nystrom with just $O(\log \Jup)$ features. Our key insight is to begin with a large set of random features, then reduce them to a small number of weighted features in a data-dependent, computationally efficient way, while preserving the statistical guarantees of using the original large set of features. We demonstrate the efficacy of our method with theory and experiments---including on a data set with over 50 million observations. In particular, we show that our method achieves small kernel matrix approximation error and better test set accuracy with provably fewer random features than state-of-the-art methods.  
\end{abstract}


\section{Introduction}

Kernel methods are essential to the machine learning and statistics toolkit because of their modeling flexibility, ease-of-use,
and widespread applicability to problems including regression, classification, clustering, dimensionality reduction, 
and one and two-sample testing~\citep{Hofmann:2008,kern_book,Chwialkowski:2016,Gretton:2012}.
In addition to good empirical performance, kernel-based methods come equipped with strong statistical and learning-theoretic 
guarantees~\citep{Vapnik:1998,Mendelson:2003,Balcan:2008,large_margin, svm,Sriperumbudur:2010}.
Because kernel methods are nonparametric, they are particularly attractive for large-scale problems, where they make it possible 
to learn complex, highly non-linear structure from data. 
Unfortunately, their time and memory costs scale poorly with data size.
Given $N$ observations, storing the kernel matrix $K$ requires $O(N^{2})$ space.
Using $K$ for learning typically requires $O(N^{3})$ time,
as this often entails inverting $K$ or computing its singular value decomposition.

To overcome poor scaling in $N$, researchers have devised various approximations to exact kernel methods.
A widely-applicable and commonly used tactic is to replace $K$ with a rank-$J$ approximation,
which reduces storage requirements to $O(NJ)$ and computational complexity of inversion or singular value decomposition to $O(NJ^{2})$~\citep{Halko:2011}. 
Thus, if $J$ can be chosen to be constant or slowly increasing in $N$, only (near-)linear time and space 
is required in the dataset size. 
Two popular approaches to constructing low-rank approximations are random feature maps (RFMs)~\citep{dot_prod_kernel,spherical_rf,compo_kernels,Samo:2015}---particularly random 
Fourier features (RFFs)~\citep{rahimi_rf}---and \Nystrom-type approximations~\citep{nystroem}. 
The \Nystrom method is based on using $J$ randomly sampled columns from $K$, and thus is data-dependent.
The data-dependent nature of \Nystrom methods can provide statistical guarantees even when $J \ll N$, 
but these results either apply only to kernel ridge regression~\citep{Alaoui:2015,Yang:2017,Rudi:2015}
or require burdensome recursive sampling schemes~\citep{Musco:2017,Lim:2018}.
Random features, on the other hand, are simple to implement and use $J$ random features that are data-independent. For problems with both large $N$ and number of covariates $p$, an extension of random features called \emph{Fast Food RFM} has been successfully applied at a fraction of the computational time required by \Nystrom-type approximations, which are \emph{exponentially} more costly in terms of $p$ \citep{fast_food}. 
The price for this simplicity and data-independence is that a large number of random features is often needed to approximate the kernel matrix well \citep{Honorio:2017,dot_prod_kernel,rahimi_rf,Yang:2012, dnn_vs_rf}.   

The question naturally arises, then, as to whether we can combine the simplicity of random features and the ability to scale to large-$p$ problems 
with the appealing approximation and statistical properties of \Nystrom-type approaches. 
We provide one possible solution by making random features data-dependent, and we show promising theoretical and empirical results.  
Our key insight is to begin with a large set of random features, then reduce them to a small set of 
weighted features in a data-dependent, computationally efficient way, while preserving 
the statistical guarantees of using the original large set.
We frame the task of finding this small set of features as an optimization problem,
which we solve using ideas from the coreset literature~\citep{hilb_coresets,giga}.
Using greedy optimization schemes such as the Frank--Wolfe algorithm,
we show that a large set of $\Jup$ random features can be compressed to an
\emph{exponentially smaller} set of just $O(\log \Jup)$ features 
while still achieving the same statistical guarantees as using all $\Jup$ features. 
We demonstrate that our method achieves superior performance to existing approaches on a range of real datasets---including 
one with over 50 million observations---in terms of kernel matrix approximation and classification accuracy.

\section{Preliminaries and related work} \label{sec:prelims}

Suppose we observe data $\{(x_n, y_n)\}_{n=1}^N$ with predictors $x_n\in \R^p$ and
responses $y_n\in\R$. In a supervised learning task, we aim to find a model $f:\R^p \to \R$ among a set of 
candidates $\mathcal{F}$ that predicts the response well for new predictors. 
Modern data sets of interest often reach $N$ in the tens of millions or higher, allowing 
analysts to learn particularly complex relationships in data. 
Nonparametric kernel methods \citep{kern_book} offer 
a flexible option in this setting; by taking $\mathcal{F}$ to be 
a reproducing kernel Hilbert space with positive-definite kernel $k:\R^p\times \R^p \to \R$, they enable
learning more nuanced details of the model $f$ as more data are obtained. 
As a result, kernel methods are widespread not just in regression and classification
but also in dimensionality reduction, conditional independence testing, one and two-sample testing,
and more~\citep{kern_pca,kern_ci,kern_test,Gretton:2012,Chwialkowski:2016}.

The problem, however, is that kernel methods become computationally intractable for large $N$. 
We consider kernel ridge regression as a prototypical example~\citep{ridge_regres}.
Let $K \in \R^{N\times N}$ be the kernel matrix 
consisting of entries $K_{nm} \defined k(x_n, x_m)$. Collect the responses into the vector $y\in\R^N$. 
Then kernel ridge regression requires solving
\[
\min_{\alpha\in\R^N} -\frac{1}{2}\alpha^T(K+\lambda I)\alpha + \alpha^Ty,
\]
where $\lambda > 0$ is a regularization parameter. 
Computing and storing $K$ alone has $O(N^2)$ complexity, while
computing the solution
$\alpha^\star = (K+\lambda I)^{-1}y$ further requires solving a linear system, with cost $O(N^3)$. 
Many other kernel methods have $O(N^3)$ dependence; see \cref{tab:impact_frob_norm}.   

To make kernel methods tractable on large datasets, a common practice is to 
replace the kernel matrix $K$ with an approximate low-rank factorization $\hat{K}:= ZZ^T \approx K$, where
$Z\in\R^{N\times J}$ and $J\ll N$. 
This factorization can be viewed as replacing the kernel function
$k$ with a finite-dimensional inner product $k(x_n, x_m) \approx z(x_n)^Tz(x_m)$ between features
generated by a \emph{feature map} $z : \R^p \to \R^J$. 
Using this type of approximation significantly reduces downstream training time, as shown in 
the second column of \cref{tab:impact_frob_norm}. 
Previous results show that as long as $ZZ^T$ is close to $K$ in the Frobenius norm, 
the optimal model $f$ using $\hat{K}$ is uniformly close to the one using $K$~\citep{cortes_kern_approx};
see the rightmost column of \cref{tab:impact_frob_norm}.

\begin{table*} 
\caption{A comparison of training time for PCA, SVM, and ridge regression using the exact kernel matrix $K$ versus a low-rank 
approximation $\hat{K} = ZZ^T$, where $Z$ has $J$ columns. 
Exact training requires either inverting or computing the SVD of the true kernel matrix $K$ at a cost of $O({N^3})$ time, as shown in the first column. The second column refers to training the methods using a low-rank factorization $Z$. For ridge regression and PCA, the low-rank training cost reflects the time to compute and invert the feature covariance matrix $Z^TZ$. For SVM, the time refers to fitting a linear SVM on $Z$ using \emph{dual-coordinate descent} with optimization tolerance $\rho$ \citep{lin_svm_train}. The third column quantifies the uniform error between the function fit using $K$ and the function fit using $Z$. For specific details of how the bounds were derived, see \cref{A:downstream}.}
  \label{tab:impact_frob_norm}
  \centering
  \begin{tabular}{llll}
    \toprule
    \cmidrule(r){1-2}
    Method  & Exact Training Cost   & Low-Rank Training Cost   & Approximation Error \\
    \midrule
    PCA& $O(N^3)$ & $\Theta(NJ^2)$  & $O\left((1 - \frac{\ell}{N} ) \|\hat{K} - K \|_F\right)$     \\
    SVM& $O(N^3)$ & $\Theta(NJ\log \frac{1}{\rho})$ & $O \left(\|\hat{K} - K \|_F^{\frac{1}{2}}\right)$    \\
    Ridge Regression& $O(N^3)$ & $\Theta(NJ^2)$ & $O\left(\frac{1}{N}\|\hat{K} - K \|_F\right)$      \\
    \bottomrule
  \end{tabular}
\end{table*}

However, finding a good feature map is a nontrivial task. One popular method, known as
\emph{random Fourier features} (RFF)~\citep{rahimi_rf}, is based on Bochner's Theorem:
\begin{nthm}[{\citep[p.\ 19]{bochner}}] \label{bochner} 
A continuous, stationary kernel 
$k(x, y) = \phi(x - y)$ for $x, y \in \R^p$ is positive definite with $\phi(0) = 1$ if and 
only if there exists a probability measure $Q$ such that
\begin{equation}
\begin{split}
\phi(x - y) &= \int_{\R^p} e^{i\omega^T(x-y)} \mathrm{d} Q(\omega) \\
&= \E_{Q} [\psi_{\omega}(x) \psi_{\omega}(y)^{*}], \quad \psi_{\omega}(x) := e^{i\omega^Tx}.
\end{split}
\end{equation} 
\end{nthm}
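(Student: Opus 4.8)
The statement is an if-and-only-if, so I would handle the two directions separately; the forward direction (representation $\Rightarrow$ positive definiteness) is routine, while the converse is the substantive part. For the easy direction, assume $\phi(x-y) = \E_Q[\psi_\omega(x)\psi_\omega(y)^*]$ for a probability measure $Q$. Given any points $x_1,\dots,x_n \in \R^p$ and scalars $c_1,\dots,c_n \in \mathbb{C}$, I would swap the finite sum and the expectation to write
\[
\sum_{j,k} c_j \overline{c_k}\, \phi(x_j - x_k) = \E_Q\!\left[ \Bigl| \sum_{j} c_j \psi_\omega(x_j) \Bigr|^2 \right] \ge 0,
\]
which is exactly positive definiteness; moreover $\phi(0) = \E_Q[1] = 1$ because $Q$ is a probability measure, and continuity of $\phi$ follows from dominated convergence since $|\psi_\omega| \equiv 1$.

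For the converse, suppose $\phi$ is continuous, positive definite, and $\phi(0) = 1$; I must construct $Q$. First I would record the elementary consequences of positive definiteness read off from the $2\times 2$ Gram matrices, namely $\phi(-t) = \overline{\phi(t)}$ and $|\phi(t)| \le \phi(0) = 1$, so that $\phi$ is bounded. The plan is then Gaussian regularization: for $\sigma > 0$ define
\[
q_\sigma(\omega) \defined \frac{1}{(2\pi)^p} \int_{\R^p} \phi(t)\, e^{-i\omega^T t}\, e^{-\sigma^2 |t|^2 / 2}\, dt,
\]
which converges absolutely since $\phi$ is bounded and the Gaussian is integrable. The key claim is $q_\sigma \ge 0$. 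To prove it I would first upgrade the discrete hypothesis to the integral form of positive definiteness, approximating $\int\!\int \phi(x-y)\overline{f(x)} f(y)\, dx\, dy$ by Riemann sums (each nonnegative by hypothesis) and passing to the limit using continuity and boundedness of $\phi$; applying this with the modulated Gaussian $f(x) = e^{i\omega^T x} e^{-\sigma^2|x|^2/2}$ and changing variables to $u = x-y$ collapses the double integral to $q_\sigma(\omega)$ up to a strictly positive constant, giving nonnegativity.

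Next I would verify, via the Fourier inversion formula evaluated at $t = 0$, that $\int q_\sigma(\omega)\, d\omega = \phi(0) = 1$, so each $q_\sigma$ is a genuine probability density; call the associated measure $Q_\sigma$. Finally I would let $\sigma \to 0$: inversion gives the identity $\phi(t)\, e^{-\sigma^2 |t|^2/2} = \int e^{i\omega^T t}\, dQ_\sigma(\omega)$, and I would extract a weakly convergent subsequence $Q_{\sigma_n} \Rightarrow Q$ and send $\sigma_n \to 0$ to obtain $\phi(t) = \int e^{i\omega^T t}\, dQ(\omega)$, which is the claimed representation.

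The main obstacle is this last limiting step: I must ensure the family $\{Q_\sigma\}$ is \emph{tight}, so that no mass escapes to infinity and the weak limit $Q$ remains a probability measure rather than a sub-probability measure. This is precisely where continuity of $\phi$ at the origin is indispensable. I would establish tightness by controlling the tail mass $Q_\sigma(\{|\omega| > R\})$ through integrating $1 - \Re \phi$ against a small Gaussian and bounding it by the modulus of continuity of $\phi$ near $0$; tightness then licenses Prokhorov's (equivalently Helly's) theorem and completes the argument.
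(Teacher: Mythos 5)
Your proposal is correct in outline, but note that the paper itself contains no proof of this statement: Theorem~\ref{bochner} is imported verbatim as a classical result (cited to p.~19 of the Bochner reference) and is used only as the foundation for the random Fourier feature construction. So there is no "paper proof" to match; what you have written is a reconstruction of the standard textbook argument. Your easy direction (interchange the finite sum with $\E_Q$ to exhibit the quadratic form as $\E_Q\bigl[\lvert\sum_j c_j\psi_\omega(x_j)\rvert^2\bigr]\ge 0$, plus dominated convergence for continuity) is exactly right. Your converse is the classical Gaussian-mollification proof and identifies the correct crux: tightness of $\{Q_\sigma\}$, which is where continuity of $\phi$ at the origin enters, followed by Prokhorov/Helly (equivalently, L\'evy's continuity theorem). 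Two small technical points you gloss over but should be aware of: (i) in the collapse step, integrating out $y$ after the substitution $u=x-y$ produces a Gaussian of width $\sigma/\sqrt{2}$ rather than $\sigma$ (complete the square via $|u+y|^2+|y|^2 = 2|y+u/2|^2+|u|^2/2$), which is harmless since $\sigma$ is arbitrary but changes the constant you claim; (ii) applying Fourier inversion to get $\int q_\sigma(\omega)\,d\omega = \phi(0)=1$ presupposes $q_\sigma\in L^1$, which is not automatic from absolute convergence of the defining integral; the standard fix is Gaussian-regularized inversion
\begin{equation*}
\int q_\sigma(\omega)\,e^{i\omega^T t}\,e^{-\epsilon^2|\omega|^2/2}\,d\omega \;\longrightarrow\; \phi(t)\,e^{-\sigma^2|t|^2/2} \quad (\epsilon\downarrow 0),
\end{equation*}
combined with $q_\sigma\ge 0$ and monotone convergence at $t=0$. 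With those repairs your argument is complete and is, in substance, the proof one finds in the cited source.
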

\Cref{bochner} implies that 
$z_{\text{complex}}(x) \defined (\nicefrac{1}{\sqrt{J}})[\psi_{\omega_1}(x), \cdots, \psi_{\omega_J}(x)]^T$, 
where $\omega_i \overset{\text{i.i.d.}}{\sim} Q$, 
provides a Monte-Carlo approximation of the true kernel function. As noted by \citet{recht_rf}, the real-valued feature map $z(x) \defined (\nicefrac{1}{\sqrt{J}})[\cos(\omega_1^T x + b_1) , \cdots, \cos(\omega_J^T x + b_J)]^T$, $b_j \overset{\text{unif.}}{\sim} [0, 2\pi]$ also yields an unbiased estimator of the kernel function; we use this feature map in what follows unless otherwise stated. 
The resulting $N\times J$ feature matrix $Z$ yields estimates of the true kernel function with standard Monte-Carlo error rates of $\bigO{\nicefrac{1}{\sqrt{J}}}$
uniformly on compact sets~\citep{rahimi_rf,error_rand_feats}. 
The RFF methodology also applies quite broadly. There are
well-known techniques for obtaining samples from $Q$ for a variety of popular kernels such as 
the squared exponential, Laplace, and Cauchy~\citep{rahimi_rf}, as well as extensions 
to more general \emph{random feature maps}~(RFMs), which apply to many types of non-stationary 
kernels~\citep{dot_prod_kernel, spherical_rf, compo_kernels}. 

The major drawback of RFMs is the $\Oh(NJp)$ time and $\Oh(NJ)$ memory costs associated
with generating the feature matrix $Z$.\footnote{\emph{Fast Food RFM} can reduce the computational cost of generating the feature matrix to $\Oh(NJ \log p)$ by exploiting techniques from sparse linear algebra. For simplicity, we focus on RFM here, but we note that our method can also be used on top of Fast Food RFM in cases when $p$ is large.} Although these are linear in $N$ as desired,
recent empirical evidence \citep{dnn_vs_rf} suggests that $J$ needs to be quite large 
to provide competitive performance with other data analysis techniques.
Recent work addressing this drawback has broadly involved two approaches: 
\emph{variance reduction} and \emph{feature compression}. 
Variance reduction techniques involve modifying the standard 
Monte-Carlo estimate of $k$, e.g.\ with control variates, quasi-Monte-Carlo techniques, or importance sampling 
\citep{quasi_rf, stein_rf, moment_rf, orthog_rf,Musco}. These approaches either depend poorly on the 
data dimension $p$ (in terms of statistical generalization error), or, for a fixed approximation error, reduce the number of features $J$ 
compared to RFM only by a constant. Feature compression techniques, on the other hand,
involve two steps: (1) ``up-projection,'' in which the basic RFM methodology generates a 
large number $\Jup$ of features---followed by (2) ``compression,'' in which those features are used to find
a smaller number $\Jdown$ of features while ideally retaining the kernel approximation error
of the original $\Jup$ features. Compact random feature maps \citep{compact_rf}
represent an instance of this technique in which compression is achieved using the 
\JL (JL) algorithm~\citep{JL_algo}. 
However, not only is the generation and storage of $\Jup$ features prohibitively expensive
for large datasets, JL compression is \emph{data-independent} and leads to only a constant reduction 
in $\Jup$ as we show in \cref{A:runtime_analysis}~(see summary in \cref{tab:theo_comp}).  


\section{Random feature compression via coresets} \label{sec:construct_rand_feats}

In this section, we present an algorithm for approximating a kernel matrix $K\in\R^{N\times N}$
with a low-rank approximation $K\approx \hat{K} = \Zdown\Zdown^T$ 
obtained using a novel feature compression technique. In the up-projection step
we generate $\Jup$ random features, but only compute their values for a small, randomly-selected 
subset of $S \ll N^2$ datapoint pairs.
In the compression step, we select a sparse, weighted subset of $\Jdown$ of the original $\Jup$ features in a sequential
greedy fashion. We use the feature values on the size-$S$ subset of all possible data pairs to decide, at each step, which feature to 
include and its weight.
Once this process is complete, we compute the resulting weighted subset of $\Jdown$ features on the whole dataset. We use this low-rank approximation of the kernel in our original learning problem. Since we use a sparse weighted
feature subset for compression---as opposed to a general linear combination as in previous work---we do not need to compute
all $\Jup$ features for the whole dataset. This circumvents the expensive $O(N\Jup p)$ up-projection computation typical of 
past feature compression methods. In addition, 
we show that our greedy compression algorithm needs to output only $\Jdown=O(\log \Jup)$
features---as opposed to past work, where $\Jdown=O(\Jup)$ was required---while maintaining the same kernel approximation error 
provided by RFM with $\Jup$ features. These results are summarized in \cref{tab:theo_comp}
and discussed in detail in \cref{sec:theo_analysis}.

\subsection{Algorithm derivation}\label{sec:alg_derivation}
Let $\Zup\in\R^{N\times \Jup}$, $\Jup > \Jdown$, 
be a fixed up-projection feature matrix generated by RFM. Our goal is to use $\Zup$ 
to find a compressed low-rank approximation $\hat{K} = \Zdown\Zdown^T \approx K$, $\Zdown\in\R^{N\times \Jdown}$.
Our approach is motivated by the fact that spectral 2-norm bounds on $K-\hat{K}$ provide
uniform bounds on the difference between learned models using $K$ and 
$\hat{K}$ \citep{cortes_kern_approx}, 
as well as the fact that the Frobenius norm
bounds the 2-norm. So we aim to find a $\Zdown$
that minimizes the Frobenius norm error $\|K-\Zdown\Zdown^T\|_F$. 
By the triangle inequality, 
\begin{align}
\lefteqn{\|K - \Zdown\Zdown^T\|_F} \nonumber \\
&\leq \|K - \Zup\Zup^T\|_F + \|\Zup\Zup^T - \Zdown\Zdown^T\|_F, \label{eq:tri_ineq}
\end{align}
so constructing a good feature compression down to $J$ features amounts 
to picking $\Zdown$ such that $\Zup\Zup^T\approx \Zdown\Zdown^T$ in Frobenius norm.
Let $\Zup_{j} \in \R^{N}$ denote the $j$th column of $\Zup$. 
Then we would ideally like to solve the optimization problem
\begin{equation} \label{combo_opt}
\begin{split}
\argmin_{w \in \R_+^{\Jup}} \quad & \frac{1}{N^2} \|\Zup\Zup^T - Z(w)Z(w)^T\|^2_F\\
\quad \text{s.t.}  \quad Z(w) &:= \left[\begin{array}{ccc}
                                      \sqrt{w_1} \Zup_1 & \cdots & \sqrt{w_{\Jup}}\Zup_{\Jup}
                                 \end{array}\right] \\ &  \|w\|_0 \leq J.
\end{split}
\end{equation}
This problem is intractable to solve exactly for two main reasons. First, computing the objective function 
requires computing $\Zup$, which itself takes $\Omega(N\Jup p)$ time. 
But it is not uncommon for all three of $N$, $\Jup$, and $p$ to be large, making this computation expensive.
Second, the cardinality, or ``0-norm,'' constraint on $w$ yields a difficult combinatorial optimization.
In order to address these issues, first note that 
\begin{equation*}
    \begin{split}
       & \frac{1}{N^2}\|\Zup\Zup^T - Z(w)Z(w)^T\|^2_F = \\
        & \E_{i,j \overset{\text{i.i.d.}}{\sim} \pi}\left[(\zup_i^T\zup_j - z_i(w)^Tz_j(w))^2\right], 
    \end{split}
\end{equation*}
where $\pi$ is the uniform distribution on the integers $\{1, \dots, N\}$, and $\zup_i, z_i(w) \in \R^{\Jup}$ 
are the $i$th rows of $\Zup$, $Z(w)$, respectively. Therefore, we can generate a 
Monte-Carlo estimate of the optimization objective by sampling $S$ pairs $i_s, j_s \overset{\text{i.i.d.}}{\sim} \pi$:
\begin{equation} \label{eq:sub_sample}
    \begin{split}
        & \frac{S}{N^2}\|\Zup\Zup^T - Z(w)Z(w)^T\|^2_F \\
        & \approx \sum_{s=1}^{S}(\zup_{i_s}^T\zup_{j_s} - z_{i_s}(w)^Tz_{j_s}(w))^2 \\
        & = (1-w)^TRR^T(1-w) \ \text{s.t.}
    \end{split}
\end{equation}
\begin{equation*}
R := \left[\begin{array}{ccc}
            \zup_{i_1} \circ \zup_{j_1}, & \cdots, & \zup_{i_S} \circ \zup_{j_S}
            \end{array}\right] \in \R^{\Jup \times S},
\end{equation*}
where $\circ$ indicates a component-wise product. Denoting the $j$th row of $R$ by $R_j \in \R^{S}$
and the sum of the rows by $r = \sum_{j=1}^{\Jup} R_{j}$, 
we can rewrite the Monte Carlo approximation of the original optimization problem in \cref{combo_opt} as
\begin{equation} \label{combo_opt_projected}
    \begin{split}
        & \argmin_{w \in \R_+^{\Jup}} \quad \| r - r(w)\|_2^2 \\
        & \text{s.t.} \quad \|w\|_0 \leq \Jdown,
    \end{split}
\end{equation}
where
$r(w) := \sum_{j=1}^{\Jup}w_j R_j$.
Note that the $s^\text{th}$ component $r_s = \zup_{i_s}^T \zup_{j_s}$ of $r$ is the Monte-Carlo approximation of $k(x_{i_s}, x_{j_s})$ using all $\Jup$ features, while $r(w)_s = (\sqrt{w} \circ \zup_{i_s})^T (\sqrt{w} \circ \zup_{j_s})$ is the sparse Monte-Carlo approximation using weights $w\in\R_+^{\Jup}$. In other words, the difference between the full optimization in  \cref{combo_opt} and the reformulated optimization in \cref{combo_opt_projected} is that the former attempts to find a sparse, weighted set of features that approximates the full $\Jup$-dimensional feature inner products for all data pairs, while the latter attempts to do so only for the subset of pairs $i_s, j_s$, $s\in\{1, \dots, S\}$. Since a kernel matrix is symmetric and $k(x_n, x_n) = 1$ for any datapoint $x_n$, we only need to sample $(i, j) $ above the diagonal of the $N \times N$ matrix (see \cref{algfeatcompress}).

The reformulated optimization problem in \cref{combo_opt_projected}---i.e., approximating the sum $r$ of a 
collection $(R_j)_{j=1}^{\Jup}$ of vectors in $\R^S$ with a sparse weighted linear combination---is precisely the 
\emph{Hilbert coreset construction problem} studied in previous work \citep{hilb_coresets,giga}. 
There exist a number of efficient algorithms to solve this problem approximately; 
in particular, the Frank--Wolfe-based method of \citet{hilb_coresets} and ``greedy iterative geodesic ascent'' (GIGA) \citep{giga}
both provide an exponentially decreasing objective value as a function of the compressed number of features $\Jdown$.
Note that it is also possible to apply other more general-purpose methods for cardinality-constrained 
convex optimization \citep{atom_pursuit,dantzig,lasso}, but these techniques are 
often too computationally expensive in the large-dataset setting.
Our overall algorithm for feature compression is shown in \cref{algfeatcompress}.    

\begin{algorithm} 
\caption{Random Feature Maps Compression (RFM-FW / RFM-GIGA)}\label{algfeatcompress}
\hspace*{\algorithmicindent} \textbf{Input:} Data $(x_n)_{n=1}^N$ in $\R^p$, RFM distribution $Q$, number of starting random features $\Jup$, number of compressed features $\Jdown$, number of data pairs $S$ \\
\hspace*{\algorithmicindent} \textbf{Output:} Weights $w \in \R^{\Jup}$ with at most $\Jdown$ non-zero entries 
\begin{algorithmic}[1]
\State $(i_s, j_s)_{s=1}^S \overset{\text{i.i.d.}}{\sim} \mathrm{Unif}\left( \{(i, j): i < j, 2 \leq j \leq N \} \right)$.
\State Sample $(\omega_j)_{j=1}^{\Jup} \overset{\text{i.i.d.}}{\sim} Q$ 
\State Sample $b_j \overset{\text{unif.}}{\sim} [0, 2\pi ], 1 \leq j \leq \Jup$
\For{$s=1:S$}
\State Compute $\zup_{i_s} \gets (\nicefrac{1}{\sqrt{\Jup}})[\cos(\omega_1^T x_{i_s} + b_1) , \cdots, \cos(\omega_{\Jup}^T x_{i_s} + b_{\Jup})]^T$; same for $\zup_{j_s}$
\EndFor
\State Compute $R \gets \left[\zup_{i_1} \circ \zup_{j_1}, \,\, \cdots, \,\, \zup_{i_S} \circ \zup_{j_S} \right]$
\State $R_j\gets$ row $j$ of $R$; $r \gets \sum_{j=1}^{\Jup} R_j$
\State $w \gets$ solution to \cref{combo_opt_projected} with FW \citep{hilb_coresets} or GIGA \citep{giga}
\State $Z(w) = \left[\begin{array}{ccc}
                                      \sqrt{w_1} \Zup_1 & \cdots & \sqrt{w_{\Jup}}\Zup_{\Jup} \end{array}\right]$  
\State \Return $Z(w)$
\end{algorithmic}
\end{algorithm}


\subsection{Theoretical results}\label{sec:theo_analysis}
In order to employ \cref{algfeatcompress}, we must choose the number $S$ of data pairs, the up-projected feature
dimension $\Jup$, and compressed feature dimension $\Jdown$. 
Selecting these three quantities involves a tradeoff between
the computational cost of using \cref{algfeatcompress} and the resulting low-rank kernel approximation Frobenius error,
but it is not immediately clear how to perform that tradeoff. 
\cref{thm:approx_norm,cor:approx_norm} provide a remarkable resolution to this issue:
roughly, if we fix $\Jup$ such that the basic random features method
provides kernel approximation error $\epsilon > 0$ with high probability, then 
choosing $S = \Omega(\Juppow{2}(\log \Jup)^2)$ and $\Jdown = \Omega(\log\Jup)$ suffices to guarantee that
the compressed feature kernel approximation error is also $O(\epsilon)$ with high probability.
In contrast, previous feature compression methods required $\Jdown = \Omega(\Jup)$ to achieve the same result; see \cref{tab:theo_comp}.
Note that \cref{thm:approx_norm} assumes that the compression step in \cref{algfeatcompress} is completed
using the Frank--Wolfe-based method from \citet{hilb_coresets}. However, this choice was made solely to 
simplify the theory; as GIGA \citep{giga} provides stronger performance both theoretically and empirically,
we expect a stronger result than \cref{thm:approx_norm,cor:approx_norm} to hold when using GIGA.
The proof of \cref{thm:approx_norm} is given in \cref{A:proof_approx_inner} and depends on the following assumptions.
\begin{nassum}\label{assumptions} 
\begin{enumerate} 
\item[(a)] The cardinality of the set of vectors $\{x_i - x_j, x_i + x_j \}_{1 \leq i < j \leq N}$ is $\frac{N(N-1)}{2}$, i.e., all  vectors $x_i - x_j, x_i + x_j, 1 \leq i < j \leq N$  are distinct. 
\item[(b)] $Q(\omega)$ for $\omega \in \R^p$ has strictly positive density on all of $\R^p$,  where $Q$ is the measure induced by the kernel $k$; see \cref{bochner}. 
\end{enumerate}
\end{nassum}
\Cref{assumptions}(a-b) are sufficient to guarantee that the compression coefficient $\nu_{\Jup}$ provided in \cref{thm:approx_norm} does not go to 1.
If $\nu_{\Jup} \to 1$ as $\Jup \to \infty$, the amount of compression could go to zero asymptotically.  When the $x_j$'s contain continuous (noisy) measurements, \Cref{assumptions}(a) is very mild since the difference or sum between two datapoints is unlikely to equal the difference or sum between two other datapoints. \Cref{assumptions}(b) is satisfied by most kernels used in practice (e.g.\ radial basis function, Laplace kernel, etc.). 

We obtain the exponential compression in \cref{thm:approx_norm} for the following reason: Frank-Wolfe and GIGA converge linearly when the minimizer of \cref{combo_opt_projected} belongs to the relative interior of the feasible set of solutions \citep{franke_wolfe}, which turns out to occur in our case. With linear convergence, we need to run only a \emph{logarithmic} number of iterations (which upper bounds the sparsity of $w$) to approximate $r$ by $r(w)$ for a given level of approximation error. For fixed $\Jup$, Lemma A.5 from \citet{hilb_coresets} immediately implies that the minimizer belongs to the relative interior. As $\Jup \rightarrow \infty$ (that is, as we represent the kernel function exactly), we show that the minimizer asymptotically belongs to the relative interior, and we provide a lower bound on its distance to the boundary of the feasible set. This distance lower bound is key to the asymptotic  worst-case bound on the compression coefficient given in \cref{thm:approx_norm} and \cref{thm:asym_comp_coef}.   

\begin{table*} 
  \caption{A comparison of the computational cost of basic random feature maps (RFM), 
  RFM with JL compression (RFM-JL), and RFM with our proposed compression using FW (RFM-FW) 
  for $N$ datapoints and $\Jup = \nicefrac{1}{\epsilon}\log \nicefrac{1}{\epsilon}$ up-projection features. 
  The first column specifies the number of compressed features $\Jdown$ needed to retain the $O(\err)$ high probability kernel approximation error guarantee of RFM. 
  The second and third columns list the complexity for computing the compressed features
  and using them for PCA or ridge regression, respectively. Theoretically, the number of datapoint pairs $S$ should be set to $\Omega(\Jup^2 (\log \Jup)^2)$ in \cref{algfeatcompress} (see  \cref{thm:approx_norm}) but empirically we find in \cref{sec:experiments} that $S$ can be set much smaller. See \cref{A:runtime_analysis} for derivations.
  }
  \label{tab:theo_comp}
  \centering
  \begin{tabular}{llll}
    \toprule
    \cmidrule(r){1-2}
    Method     & \# Compressed Features $\Jdown$ &   Cost of Computing $\Zdown$   &  PCA/Ridge Reg.\ Cost    \\
    \midrule
    RFM & $O \left( \Jup \right)$ & $O \left( N\Jup \right)$  &  $O \left( N\Juppow{2} \right)$     \\
    
    RFM-JL     & $O \left( \Jup \right) $ & $O \left( N\Jup\log \Jup \right)$ &  $O \left( N\Juppow{2} \right)$     \\
    
    RFM-FW     & $O \left( \log \Jup \right)$ & $O\left( S\Jup \log \Jup + N \log \Jup  \right)$ &  $O \left( N(\log \Jup)^2 \right)$      \\
    \bottomrule
  \end{tabular}
\end{table*}

\begin{nthm} \label{thm:approx_norm}
Fix $\epsilon > 0$, $\delta \in (0, 1)$, and $\Jup\in\mathbb{N}$. Then there are constants $\nu_{\Jup} \in (0, 1)$, which depends only on $\Jup$, and $0 \leq \cdelta < \infty$, which depends only on $\delta$, such that if 
\begin{align*}
J = \Omega\left(-\frac{\log\Jup}{\log\nu_{\Jup}}\right) 
\ and \ 
S =  \Omega \left(\frac{\cdelta}{\err^2} \left[\frac{\log\frac{1}{\err}}{\log\nu_{\Jup}}\right]^4 \log\Jup \right),
\end{align*}
then with probability at least $1 - \delta$, the output $Z$ of \cref{algfeatcompress} satisfies
\begin{equation*}
\frac{1}{N^2}\|\Zup\Zup^T - \Zdown\Zdown^T \|_F^2 \leq \err.
\end{equation*}
Furthermore, the compression coefficient is asymptotically bounded away from 1. That is,
\begin{equation}
   0 < \limsup_{\Jup \rightarrow \infty} \nu_{\Jup} < 1.
\end{equation}
\end{nthm}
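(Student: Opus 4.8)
The plan is to control the target quantity $\frac{1}{N^2}\|\Zup\Zup^T - \Zdown\Zdown^T\|_F^2$ by splitting it into an optimization (coreset) error and a statistical (Monte-Carlo subsampling) error, and then to treat the asymptotic claim separately through the geometry underlying the coreset problem. Write $f(w) \defined \frac{1}{N^2}\|\Zup\Zup^T - Z(w)Z(w)^T\|_F^2$ for the true objective and $\hat f(w) \defined \frac{1}{S}\|r - r(w)\|_2^2$ for its Monte-Carlo surrogate from \cref{combo_opt_projected}; by the identity preceding \cref{combo_opt_projected} we have $f(w) = \E_{i,j\sim\pi}[(\zup_i^Tz_j(w) - z_i(w)^Tz_j(w))^2]$, so $\hat f(w)$ is the empirical average approximating $f(w)$. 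Let $\hat w$ be the vector returned by Frank--Wolfe and let $\mathcal{W}$ be the set of weight vectors the algorithm can output after $\Jdown$ steps (nonnegative, at most $\Jdown$ nonzeros, controlled $\ell_1$ norm). First I would bound
\[
f(\hat w) \le \hat f(\hat w) + \sup_{w \in \mathcal{W}} |f(w) - \hat f(w)|,
\]
and handle the two terms independently.

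For the first term, I would invoke the linear convergence of Frank--Wolfe on the Hilbert coreset problem \citep{hilb_coresets}: when the minimizer of \cref{combo_opt_projected} lies in the relative interior of $\conv\{R_j\}_{j=1}^{\Jup}$, the objective decays geometrically, so that $\hat f(\hat w) \le C\,\nu_{\Jup}^{2\Jdown}$ for a rate $\nu_{\Jup} \in (0,1)$ determined by the distance of the target from the boundary of that hull. For fixed $\Jup$, Lemma A.5 of \citet{hilb_coresets} guarantees this relative-interior condition, which both defines $\nu_{\Jup}$ and shows it depends only on $\Jup$ (and the data). Choosing $\Jdown = \Omega(-\log \Jup / \log \nu_{\Jup})$ then forces $\hat f(\hat w) \le \err/2$. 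For the second term, I would establish uniform concentration of $\hat f$ around $f$ over $\mathcal{W}$. Each per-pair loss $(r_s - r(w)_s)^2$ is bounded because $|\zup_{i,l}\zup_{j,l}| \le \nicefrac{1}{\Jup}$ and the weights in $\mathcal{W}$ are controlled, so a Bernstein/Hoeffding bound, combined with a union bound over the $\binom{\Jup}{\Jdown}$ possible supports and a covering of the weight polytope within each support, yields $\sup_{w \in \mathcal{W}}|f(w) - \hat f(w)| \le \err/2$ with probability at least $1-\delta$, provided $S = \Omega\big(\frac{\cdelta}{\err^2}[\log\tfrac{1}{\err}/\log\nu_{\Jup}]^4 \log \Jup\big)$ with $\cdelta = \Theta(\log\tfrac{1}{\delta})$. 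Here the polynomial-in-$\Jdown$ factor (equivalently the $[\log\tfrac1\err/\log\nu_{\Jup}]^4$ term) tracks the range of the losses together with the covering/union-bound entropy, and $\log\Jup$ is the support-counting contribution. Combining the two pieces gives $f(\hat w) \le \err$ with probability at least $1-\delta$, which is the finite-sample claim.

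It remains to prove $0 < \limsup_{\Jup\to\infty}\nu_{\Jup} < 1$, and I expect this asymptotic step to be the main obstacle. The rate $\nu_{\Jup}$ is monotone in the distance $\rho_{\Jup}$ from the normalized target to the boundary of $\conv\{R_j\}$: a larger $\rho_{\Jup}$ gives a smaller $\nu_{\Jup}$. The finite-$\Jup$ argument only delivers $\nu_{\Jup} < 1$ pointwise, so the real work is to show that $\rho_{\Jup}$ stays bounded away from $0$ as $\Jup \to \infty$, which is what prevents $\nu_{\Jup} \to 1$. Here \cref{assumptions} enters crucially: \cref{assumptions}(b) (full support of $Q$) ensures the random atoms densely populate all relevant directions as more features are drawn, while \cref{assumptions}(a) (distinctness of the vectors $x_i \pm x_j$) rules out a degenerate limiting geometry in which the target could approach the boundary. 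Passing to the $\Jup \to \infty$ limit, where the empirical inner products $\zup(x)^T\zup(y)$ converge to $k(x,y)$ by \cref{bochner}, I would show the limiting target lies strictly in the relative interior with a quantitative interior radius, and then transfer a uniform lower bound on $\rho_{\Jup}$ back to finite $\Jup$; this is the quantitative content of \cref{thm:asym_comp_coef} and yields $\limsup \nu_{\Jup} < 1$. The matching lower bound $\limsup \nu_{\Jup} > 0$ follows because the atoms have bounded spread, so $\rho_{\Jup}$ cannot grow without bound and the geometric rate cannot be driven arbitrarily fast. The delicate point throughout is the uniform (in $\Jup$) control of the interior radius: establishing that adding random atoms never pushes the target arbitrarily close to the hull boundary is the crux, and is where the measure-theoretic consequences of \cref{assumptions}(a)--(b) must be made quantitative.
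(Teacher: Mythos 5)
Your high-level skeleton matches the paper's: split the error into a Frank--Wolfe optimization term, handled by the linear convergence guarantee of \citet{hilb_coresets} under the relative-interior condition that defines $\nu_{\Jup}$, plus a subsampling term, and treat the asymptotic bound on $\nu_{\Jup}$ by a separate geometric argument. However, two of your steps have genuine gaps.

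First, the statistical term. You propose uniform concentration of $\hat f$ over the output class $\mathcal{W}$ via a union bound over $\binom{\Jup}{\Jdown}$ supports plus a covering of the weight polytope, asserting the per-pair losses are bounded because ``the weights in $\mathcal{W}$ are controlled.'' This is exactly what cannot be assumed: Frank--Wolfe runs on the polytope $\{w \ge 0 : \sum_j w_j \hat{\sigma}_j = \hat{\sigma}\}$, whose vertices have magnitude $\hat{\sigma}/\hat{\sigma}_j$, a random and a priori unbounded quantity. The paper devotes \cref{lem:ratio_bound} and the Chebyshev argument around \cref{eq:M_delta} to showing $\hat{\sigma}^2/(\Juppow{2}\hat{\sigma}_i^2) \le 5M$ with high probability; this $M$ is where the constant $\cdelta$ actually comes from (a quantile of the distribution of the ratios $\sigma_i/\sigma_j$), not $\Theta(\log\frac{1}{\delta})$ as you claim. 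Moreover your entropy accounting does not deliver the stated rate: the support union bound contributes $\Jdown\log\Jup$, not $\log\Jup$, and the covering adds another term of order $\Jdown$, so your route needs $S = \Omega\bigl(\err^{-2} M^4 \Jdown^5 \log \Jup\bigr)$ --- an extra factor of $\Jdown \asymp \log\frac{1}{\err}/|\log\nu_{\Jup}|$ beyond the theorem's bound, so the stated $S$ is not shown to suffice. The paper avoids covering numbers entirely: it applies entrywise Hoeffding to the $\Jup\times\Jup$ Gram matrices, bounding $D^* = \max_{i,j}\bigl|(\tfrac{1}{N^2}R^*R^{*T})_{ij} - (\tfrac{1}{S}RR^T)_{ij}\bigr|$, which is uniform over \emph{all} $w$ automatically through the bound $\|1-w\|_1^2 D^*$ on the difference of quadratic forms, and then bounds $\|1-\hat w\|_1$ once using the polytope constraint together with \cref{lem:ratio_bound}; the $[\log\frac{1}{\err}/\log\nu_{\Jup}]^4$ factor is precisely $(\|1-\hat w\|_1^2/(\Juppow{2}M^2))^2 \lesssim \Jdown^4$.

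Second, the asymptotic claim $\limsup_{\Jup\to\infty}\nu_{\Jup} < 1$, which you correctly flag as the main obstacle, remains a plan rather than a proof. Saying \cref{assumptions}(a)--(b) ``rule out a degenerate limiting geometry'' is not the argument; the paper's mechanism has two concrete ingredients you would need to supply. (i) \Cref{lem:hull_to_spere}: the convex hull of the normalized atoms $R_j/\sigma_j$ converges to the \emph{entire} unit sphere. Its proof uses the identity $\cos a\cos b = \tfrac12(\cos(a+b)+\cos(a-b))$ to reduce to realizing arbitrary sign patterns of cosines at the distinct points $x_i \pm x_j$ (this is where \cref{assumptions}(a) enters) via the infinite VC dimension of the cosine family, with the full support of $Q$ (\cref{assumptions}(b)) guaranteeing such frequencies occur. (ii) The limiting distance from the normalized target $\tilde r = r/\sigma$ to the hull boundary equals $1 - \lim \|\tilde r\|$, and strict positivity requires the strict Jensen inequality $k(x_i,x_j) = \E[\cos(\omega^Tx_i+b)\cos(\omega^Tx_j+b)] < \E\,|\cos(\omega^Tx_i+b)\cos(\omega^Tx_j+b)|$, which is what makes $\lim\|\tilde r\| = \|K\|_F/c_Q < 1$ in \cref{thm:asym_comp_coef}. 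Without (ii), ``the limiting target lies strictly in the relative interior'' is an assertion, not a proof; and (i) is not a technicality but exactly what converts that norm gap into a lower bound on the distance to the hull boundary, hence an upper bound on $\nu_{\Jup}$ bounded away from $1$.
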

\begin{ncor}\label{cor:approx_norm}
In the setting of \cref{thm:approx_norm}, if we let $\Jup = \Omega(\nicefrac{1}{\epsilon}\log \nicefrac{1}{\epsilon} )$, then 
\begin{equation*}
\frac{1}{N^2}\|K - \Zdown\Zdown^T \|_F^2 = O(\err).
\end{equation*}
\end{ncor}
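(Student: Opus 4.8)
The plan is to reduce the corollary to \cref{thm:approx_norm} via the triangle inequality in \cref{eq:tri_ineq}, paying for the extra term with the standard random-feature approximation rate. Squaring \cref{eq:tri_ineq} and applying $(a+b)^2 \le 2a^2 + 2b^2$ gives
\begin{equation*}
\frac{1}{N^2}\|K - \Zdown\Zdown^T\|_F^2 \le \frac{2}{N^2}\|K - \Zup\Zup^T\|_F^2 + \frac{2}{N^2}\|\Zup\Zup^T - \Zdown\Zdown^T\|_F^2 .
\end{equation*}
It therefore suffices to show that each summand on the right is $O(\err)$ with high probability and to combine the two events by a union bound.

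The second summand is precisely the quantity controlled by \cref{thm:approx_norm}. Applying that theorem with $\err/4$ in place of $\err$ and failure probability $\delta/2$, and noting that the hypothesis $\Jup = \Omega(\tfrac1\err \log\tfrac1\err)$ fixes a legitimate value of $\Jup$, yields $\tfrac{1}{N^2}\|\Zup\Zup^T - \Zdown\Zdown^T\|_F^2 \le \err/4$ on an event of probability at least $1-\delta/2$, using $\Jdown = \Omega(\log\Jup)$ compressed features and $S$ as prescribed. (Here the required number of compressed features is $\Jdown = \Omega(\log\Jup)$, which follows from the theorem's bound $\Jdown = \Omega(-\log\Jup/\log\nu_{\Jup})$ together with $\limsup_{\Jup}\nu_{\Jup} < 1$, so that $-\log\nu_{\Jup}$ is bounded below by a positive constant.)

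For the first summand I would invoke the uniform Monte-Carlo guarantee for random features. By \cref{bochner} the feature map is an unbiased estimator of $k$, so each entry $(\Zup\Zup^T)_{nm}$ is an average of $\Jup$ i.i.d.\ bounded terms $\cos(\omega_j^Tx_n+b_j)\cos(\omega_j^Tx_m+b_j)$ with mean $K_{nm}$. The Rahimi--Recht covering-number argument over the compact set containing the data then gives $\sup_{n,m}|(\Zup\Zup^T)_{nm} - K_{nm}| \le \sqrt{\err}$ with probability at least $1-\delta/2$ as soon as $\Jup = \Omega(\tfrac1\err \log\tfrac1\err)$; the factor $\tfrac1\err$ arises from forcing the per-entry Hoeffding deviation down to $O(\sqrt\err)$, while the $\log\tfrac1\err$ is the logarithm of the covering number of the domain at resolution $\sqrt\err$ (absorbing the dimension $p$ and the domain diameter into the constant). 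Bounding the Frobenius average by the entrywise maximum then gives, on that event,
\begin{equation*}
\frac{1}{N^2}\|K-\Zup\Zup^T\|_F^2 \le \max_{n,m}(K_{nm}-(\Zup\Zup^T)_{nm})^2 \le \err .
\end{equation*}

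Combining the two events by a union bound gives the claim with total failure probability at most $\delta$. The only real obstacle is calibrating the random-feature term so that the feature count comes out to exactly $\Jup = \Omega(\tfrac1\err\log\tfrac1\err)$ and, in particular, is \emph{independent of $N$}: a naive union bound over all $N^2$ entries would inject an unwanted $\log N$ factor, so one must use the sharper uniform bound over the continuous domain rather than a per-pair concentration inequality. Everything else is a routine triangle-inequality-plus-union-bound assembly, since \cref{thm:approx_norm} has already absorbed the difficulty of bounding the compression error.
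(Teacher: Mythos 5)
Your proof is correct and follows essentially the same route as the paper: the triangle inequality of \cref{eq:tri_ineq} splits the error into the compression term controlled by \cref{thm:approx_norm} and the up-projection term controlled by Claim~1 of \citet{rahimi_rf}, i.e.\ the uniform covering-number bound over the compact domain, which is exactly how the paper (see \cref{A:runtime_analysis}) converts a max-entrywise guarantee into the averaged Frobenius bound without incurring any $\log N$ factor. The only cosmetic differences are that you square the triangle inequality via $(a+b)^2 \le 2a^2 + 2b^2$ and make the $\delta/2$ union-bound bookkeeping explicit, both of which the paper leaves implicit.
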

\begin{proof}
Claim 1 of \citet{rahimi_rf} implies that $\frac{1}{N^2}\|K - \Zup\Zup^T\|_F^2 = \bigO{\err}$ if we set $\Jup = \Omega(\nicefrac{1}{\epsilon}\log \nicefrac{1}{\epsilon})$. The result follows by combining \cref{thm:approx_norm,eq:tri_ineq}.
\end{proof}

\cref{tab:theo_comp} builds on the results of \cref{thm:approx_norm,cor:approx_norm} 
to illustrate the benefit of our proposed feature compression technique
in the settings of kernel principal component analysis (PCA) and ridge regression. Since random features and random features with JL compression
both have $\Jdown = \Omega(\Jup)$, the 
$O(N\Juppow{2})$ cost of computing the feature covariance matrix $\Zdown^T\Zdown$ dominates when training
PCA or ridge regression. In contrast, the dominant cost of random features with our proposed algorithm is the compression step;
each iteration of Frank-Wolfe has cost $O(\Jup S)$, and we run it for $O(\log \Jup)$ iterations.

While \cref{cor:approx_norm} says how large $S$ must be for a given $\Jup$, it does not say how to pick $\Jup$, or equivalently how to choose the level of precision $\epsilon$. As one would expect, the amount of precision needed depends on the downstream application. For example, recent theoretical work suggests that both kernel PCA and kernel ridge regression require $\Jup$ to scale only sublinearly with the number of datapoints $N$ to achieve the same statistical guarantees as an exact kernel machine trained on all $N$ datapoints \citep{kern_pca_asym, Musco, rudi_kernel_ridge}. For kernel support vector machines (SVMs), on the other hand, \citet{error_rand_feats} suggest that $\Jup$ needs to be larger than $N$. Such a choice of $\Jup$ would make random features \emph{slower} than training an exact kernel SVM. However, since \citet{error_rand_feats} do not provide a lower bound, it is still an open theoretical question how $\Jup$ must scale with $N$ for kernel SVMs. 

For $\Jup$ even moderately large, setting $S = \Omega(\Juppow{2}(\log \Jup)^2))$ to satisfy \cref{thm:approx_norm} will be prohibitively expensive. Fortunately, in practice,   
we find $S \ll \Juppow{2}$ suffices to provide
significant practical computational gains without adversely affecting approximation error;
see the results in \cref{sec:experiments}.
We conjecture that we see this behavior since we expect even a small number of data pairs $S$
to be enough to guide feature compression in a data-dependent manner. We empirically verify this intuition in \cref{fig:S_vs_error} of \cref{sec:experiments}.

Finally, we provide an asymptotic upper bound for the compression coefficient $\nu_{\Jup}$. We achieve greater compression 
when $\nu_{\Jup} \downarrow 0$. Hence, the upper bound below shows the asymptotic worst-case rate of compression.
\begin{nthm} \label{thm:asym_comp_coef}
Suppose all $\{(i, j): 1 \leq i < j \leq N\}$ are sampled in \cref{algfeatcompress}. Then, 
\begin{equation} \label{eq:asymp_comp_coef}
   0 < \limsup_{\Jup \rightarrow \infty} \nu_{\Jup} < 1 - \frac{\left(1 - \frac{\|K\|_F}{c_Q} \right)^2}{2} < 1, 
\end{equation}
where $K$ is the exact kernel matrix and 
\begin{equation} 
\begin{split}
     & c_Q \coloneqq \frac{1}{N} \E_{\omega \sim Q, b \sim \text{Unif}[0, 2\pi]} \| u(\omega, b) \|_2, \textrm{with} \\
     & u(\omega, b) \coloneqq (\cos(\omega^Tx_i + b) \cos(\omega^Tx_j + b))_{i,j \in [N]}.
\end{split}
\end{equation}
\end{nthm}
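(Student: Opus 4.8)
The plan is to read $\nu_{\Jup}$ for what it is in the proof of \cref{thm:approx_norm}: the linear (geometric) contraction rate of Frank--Wolfe applied to the coreset problem \cref{combo_opt_projected}, i.e.\ the factor for which the residual norm obeys $\|r-r(w^{(J)})\|_2\lesssim \nu_{\Jup}^{J}$ (this is exactly what makes $J=\Omega(-\log\Jup/\log\nu_{\Jup})$ suffice). By the interior-optimum Frank--Wolfe theory invoked there via Lemma~A.5 of \citet{hilb_coresets}, this rate has the geometric form $\nu_{\Jup}=\sqrt{1-(d_{\Jup}/D_{\Jup})^2}$, where, writing $\ell_j:=R_j/\|R_j\|_2$ for the normalized feature vectors, $D_{\Jup}$ is the diameter of $\conv\{\ell_1,\dots,\ell_{\Jup}\}$ and $d_{\Jup}$ is the distance from the normalized target $\bar\ell_{\Jup}:=r/\sigma$ (with $\sigma:=\sum_{j}\|R_j\|_2$) to the boundary of that hull. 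Using the elementary bound $\sqrt{1-x}\le 1-\tfrac12 x$ — which is precisely where the factor $\tfrac12$ in \cref{eq:asymp_comp_coef} originates — the problem reduces to a lower bound on $\liminf_{\Jup}(d_{\Jup}/D_{\Jup})$.

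First I would compute the $\Jup\to\infty$ limits of the ingredients. Because all pairs $\{(i,j):i<j\}$ are sampled, each row $R_j$ is, up to the $1/\Jup$ scaling, the vector $u(\omega_j,b_j)$ with $(\omega_j,b_j)$ drawn i.i.d.\ from $Q\times\mathrm{Unif}[0,2\pi]$; since the entries are bounded, the strong law of large numbers gives $\sigma=\tfrac1{\Jup}\sum_j\|u(\omega_j,b_j)\|_2\to\E\|u(\omega,b)\|_2=N c_Q$ almost surely, and $r=\tfrac1{\Jup}\sum_j u(\omega_j,b_j)\to\tfrac12\kappa$, where $\kappa$ collects the true kernel values $(k(x_i,x_j))_{i<j}$ (via the identity $\E_b[\cos(\omega^Tx_i+b)\cos(\omega^Tx_j+b)]=\tfrac12\cos(\omega^T(x_i-x_j))$ followed by \cref{bochner}). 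Consequently $\|\bar\ell_{\Jup}\|_2=\|r\|_2/\sigma$ converges to a deterministic constant that, after bounding $\|\kappa\|_2$ in terms of $\|K\|_F$ and tracking the $\tfrac1{N^2}$ normalization used throughout the paper, is at most $\|K\|_F/c_Q$.

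The crux is the lower bound on $d_{\Jup}/D_{\Jup}$. Here I would show that as $\Jup\to\infty$ the random feasible hull $\conv\{\ell_1,\dots,\ell_{\Jup}\}$ converges to the convex hull of the support of the map $(\omega,b)\mapsto\ell(\omega,b)=u(\omega,b)/\|u(\omega,b)\|_2$, that this limiting set is full-dimensional and "fat," and that the limiting normalized target $\bar\ell_\infty$ sits in its relative interior with a margin bounded below by a quantity of order $1-\|\bar\ell_\infty\|_2$. Combining this margin estimate with the limit from the previous step yields $\liminf_{\Jup}(d_{\Jup}/D_{\Jup})\ge 1-\|K\|_F/c_Q$, whence $\limsup_{\Jup}\nu_{\Jup}\le 1-\tfrac12(1-\|K\|_F/c_Q)^2$; the rightmost inequality $<1$ is then immediate once $\|K\|_F\neq c_Q$. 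The left inequality $0<\limsup_{\Jup}\nu_{\Jup}$ follows because each $\nu_{\Jup}\in(0,1)$ and the ratio $d_{\Jup}/D_{\Jup}$ stays bounded away from $1$: the target is a genuine average over a nondegenerate support rather than a single vertex, so $d_{\Jup}<D_{\Jup}$ with a uniform gap.

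The main obstacle I anticipate is making this distance-to-boundary lower bound rigorous simultaneously with the interchange of limits, since the polytope $\conv\{\ell_1,\dots,\ell_{\Jup}\}$ is itself random and grows with $\Jup$. I would establish a uniform law of large numbers over the feasible set so that the boundary-distance functional converges to its deterministic limit, and then certify that the limiting centroid is interior with the stated margin. \Cref{assumptions}(b), the strictly positive density of $Q$, is exactly what guarantees that the support of $\ell(\omega,b)$ — and hence the limiting hull — is full-dimensional, preventing $d_\infty$ from collapsing to $0$; \cref{assumptions}(a) rules out coincidental collinearities among the $R_j$ that could otherwise degenerate the hull or shrink its diameter. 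Establishing this convergence of the random polytope's geometry, rather than the subsequent algebra relating $\|r\|_2/\sigma$ to $\|K\|_F/c_Q$, is where the real difficulty lies.
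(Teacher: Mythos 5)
Your proof skeleton coincides with the paper's: invoke the Frank--Wolfe geometric rate $\nu_{\Jup}^2 = 1 - d^2/(\sigma^2\bar\eta^2)$ from \cref{lem:fw_convg} (Theorem 4.4 of \citet{hilb_coresets}), compute the $\Jup\to\infty$ limits of $r$ and $\sigma$ by the law of large numbers (these are exactly the paper's \cref{eq:limit_r} and \cref{eq:sigma_limit}, giving the ratio $\|K\|_F/c_Q$, with strictness via Jensen's inequality), and reduce the theorem to a lower bound on the limiting distance from the normalized target $\tilde r = r/\sigma$ to the boundary of the convex hull of the normalized vectors $R_j/\sigma_j$.

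At that reduction, however, you have a genuine gap, and the route you sketch would not close it. You propose to show that the random hull converges to the convex hull of the support of $(\omega,b)\mapsto u(\omega,b)/\|u(\omega,b)\|_2$, that this limit is ``full-dimensional and fat,'' and that the target consequently has margin of order $1-\|\tilde r\|_2$. Full-dimensionality---which is all that the positive density of $Q$ directly buys you---implies no such margin: a full-dimensional convex body can place $\tilde r$ arbitrarily close to a flat face, so the distance to the boundary can be arbitrarily small compared to $1-\|\tilde r\|_2$. The identity $d_\infty = 1-\|\tilde r\|_2$ holds only if the limiting hull is the \emph{entire} unit ball, i.e., the normalized feature vectors are dense in the whole sphere $\mathcal{S}^{N(N-1)/2-1}$. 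That is precisely what the paper establishes in \cref{lem:hull_to_spere}, and it is the real content of the proof: writing $\cos a\cos b = \tfrac{1}{2}\bigl(\cos(a+b)+\cos(a-b)\bigr)$, using \cref{assumptions}(a) (all $x_i\pm x_j$ distinct) and the fact that cosine classifiers realize every sign pattern on distinct points (infinite VC dimension), together with \cref{assumptions}(b), one shows every sign pattern in $\{-1,1\}^{N(N-1)/2}$---hence by continuity a neighborhood of every direction on the sphere---is hit with positive probability, so the hull Hausdorff-converges to the full ball. Without this identification, your uniform-law-of-large-numbers program only yields convergence to the hull of the support, and the claimed bound $\liminf_{\Jup} d_{\Jup} \ge 1 - \|K\|_F/c_Q$ does not follow. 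A secondary, smaller issue: if the hull is the ball, its diameter is $2$, so your ratio $d_{\Jup}/D_{\Jup}$ tends to $(1-\|\tilde r\|_2)/2$ rather than $1-\|\tilde r\|_2$, which degrades your constant; the paper instead bounds $\bar\eta^2$ directly via \cref{lem:fw_convg}.
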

By \cref{bochner}, $\| K \|_F = \frac{1}{N} \| \E_{\omega, b} u(\omega, b) \|_2$, so $\| K \|_F \leq c_Q$ by Jensen's inequality. In \cref{app:proof_asym_comp_coef}, we show this inequality holds strictly.
Hence the term squared in \cref{eq:asymp_comp_coef} lies in $(0,1]$.  Recall $\|K\|_F^2 = \sum_{i=1}^N \lambda_i$, for $\lambda_i$ the eigenvalues of $K$. With these observations, \Cref{thm:asym_comp_coef} says that the asymptotic worst-case rate of compression improves if $K$'s eigenvalue sum is smaller. As rough intuition: If the sum is small, then $K$ may be nearly low-rank and thus easier to approximate via a low-rank approximation. Since we subsample only $S$ of all  pairs in \cref{thm:approx_norm}, the upper bound in \cref{thm:asym_comp_coef} does not necessarily apply. Nonetheless, for $S$ moderately large, this upper bound roughly characterizes the worst-case compression rate for \cref{algfeatcompress}.


\begin{table}
  \caption{All datasets are taken from \texttt{LIBSVM}.}\label{tab:datasets}
  \label{sample-table}
  \centering
  \begin{tabular}{llll}
    \toprule
    \cmidrule(r){1-2}
    Dataset     & $\#$ Samples     & Dimension     & $\#$ Classes  \\
    \midrule
    Adult & 48,842  & 123 & 2    \\
    Human 
    	& 10,299 & 561 & 6     \\
    MNIST     & 70,000 & 780 & 10     \\
    Sensorless     & 58,000 & 9 & 11     \\
        Criteo     & 51,882,752 & 1,000,000 & 2    \\
    \bottomrule
  \end{tabular}
\end{table}

\section{Experiments} \label{sec:experiments}

\begin{figure}
\begin{centering}
\includegraphics[width=.95\linewidth]{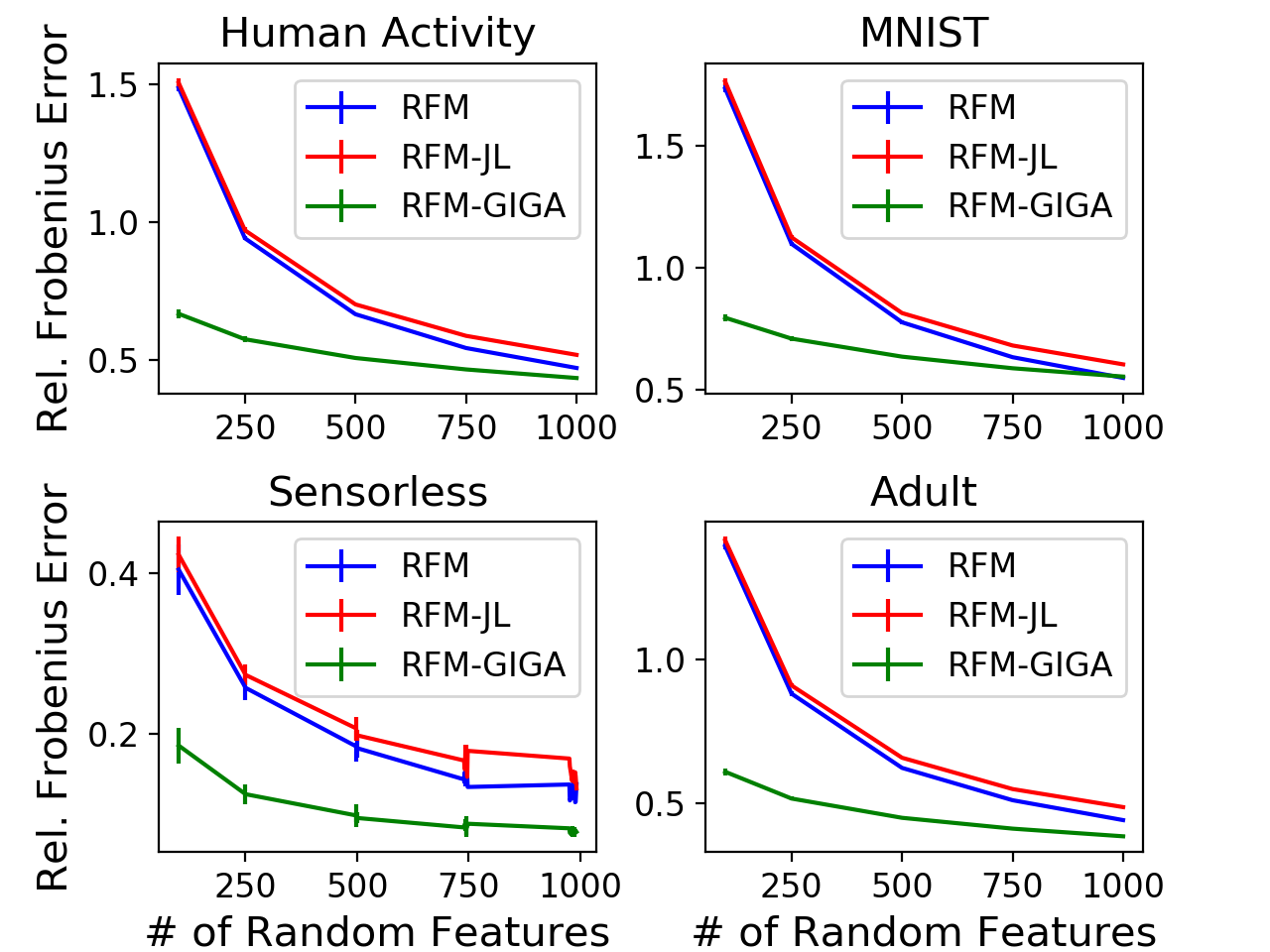}\par
\end{centering}
\caption{Kernel matrix approximation error.
Lower is better.
Points average 20 runs; error bar is one standard deviation.}
\label{fig:kern_errors}
\end{figure}

In this section we provide an empirical comparison 
of basic random feature maps (RFM) \citep{rahimi_rf},
RFM with Johnson-Lindenstrauss compression (RFM-JL) \citep{compact_rf},
and our proposed algorithm with compression via greedy iterative 
geodesic ascent \citep{giga} (RFM-GIGA). 
We note that there are many other random feature methods,
such as Quasi-Monte-Carlo random features \citep{quasi_rf}, that one might consider
besides RFM-JL. A strength of our method is that it can be used as an additional compression step
with these methods
and is thus complementary with them;
we discuss this idea and demonstrate the resulting improvements in \cref{A:add_experiments}.
In this section, we focus on Johnson-Lindenstrauss as the current state-of-the-art random features compression method.

\begin{figure}
\begin{centering}
\includegraphics[width=.95\linewidth]{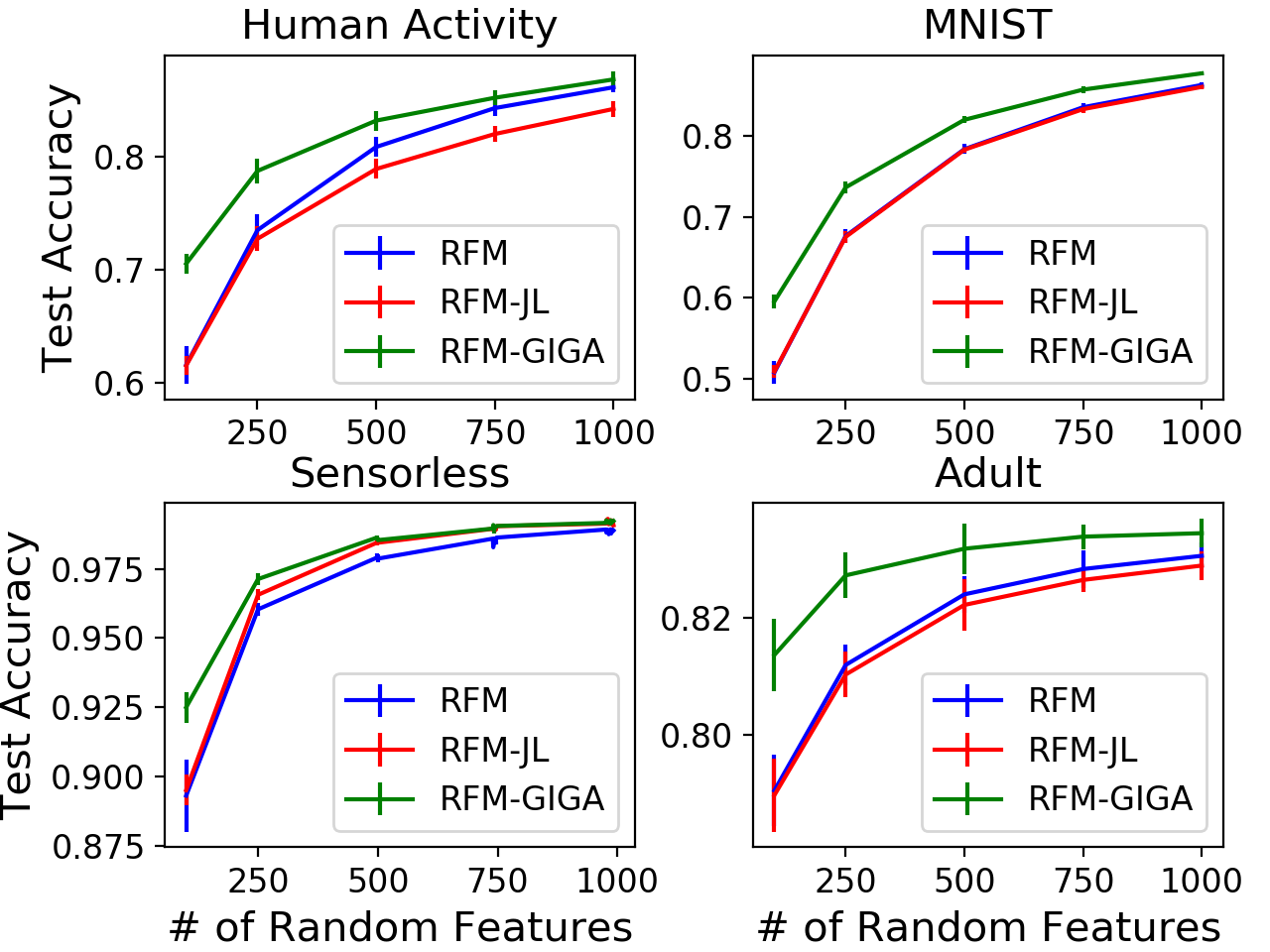}\par
\end{centering}
\caption{Classification accuracy.
Higher is better.
Points average 20 runs; error bar is one standard deviation.}
\label{fig:class_acc}
\end{figure}

We compare performance on the task of kernel SVM classification \citep{svm}. We consider five 
real, large-scale datasets, summarized in \cref{tab:datasets}.
We assess performance via two quality metrics---Frobenius error 
of the kernel approximation and test set classification error. We also measure
overall computation time---including both random feature projection and SVM training.
We use the radial basis kernel $k(x, y) = e^{-\gamma \|x - y\|^2}$; we 
pick both $\gamma$ and the SVM regularization strength for each dataset
by randomly sampling 10,000 datapoints, training an exact kernel
SVM on those datapoints, and using 5-fold cross-validation.
For both RFM-JL and RFM-GIGA we set $\Jup = 5{,}000$,
and for RFM-GIGA we set $S = 20{,}000$.

\cref{fig:kern_errors,fig:class_acc} show the relative kernel matrix approximation error $\|ZZ^T - K\|_F / \|K\|_F$ and test classification accuracy, respectively, as a function of the number of compressed features $\Jdown$. Note that, since we cannot actually compute $K$, we approximate the relative Frobenius norm error by randomly sampling $10^4$ datapoints. We ran each experiment 20 times; the results in \cref{fig:kern_errors,fig:class_acc} show the mean across these trials with one standard deviation denoted with error bars. RFM-GIGA outperforms RFM and RFM-JL across all the datasets, on both metrics, for the full range of number of compressed features that we tested. This empirical result corroborates the theoretical results presented earlier in \cref{sec:theo_analysis}; in practice, RFM-GIGA requires approximately an order of magnitude fewer features than either RFM or RFM-JL.

\begin{figure}
\begin{centering}
\includegraphics[width=.7\linewidth]{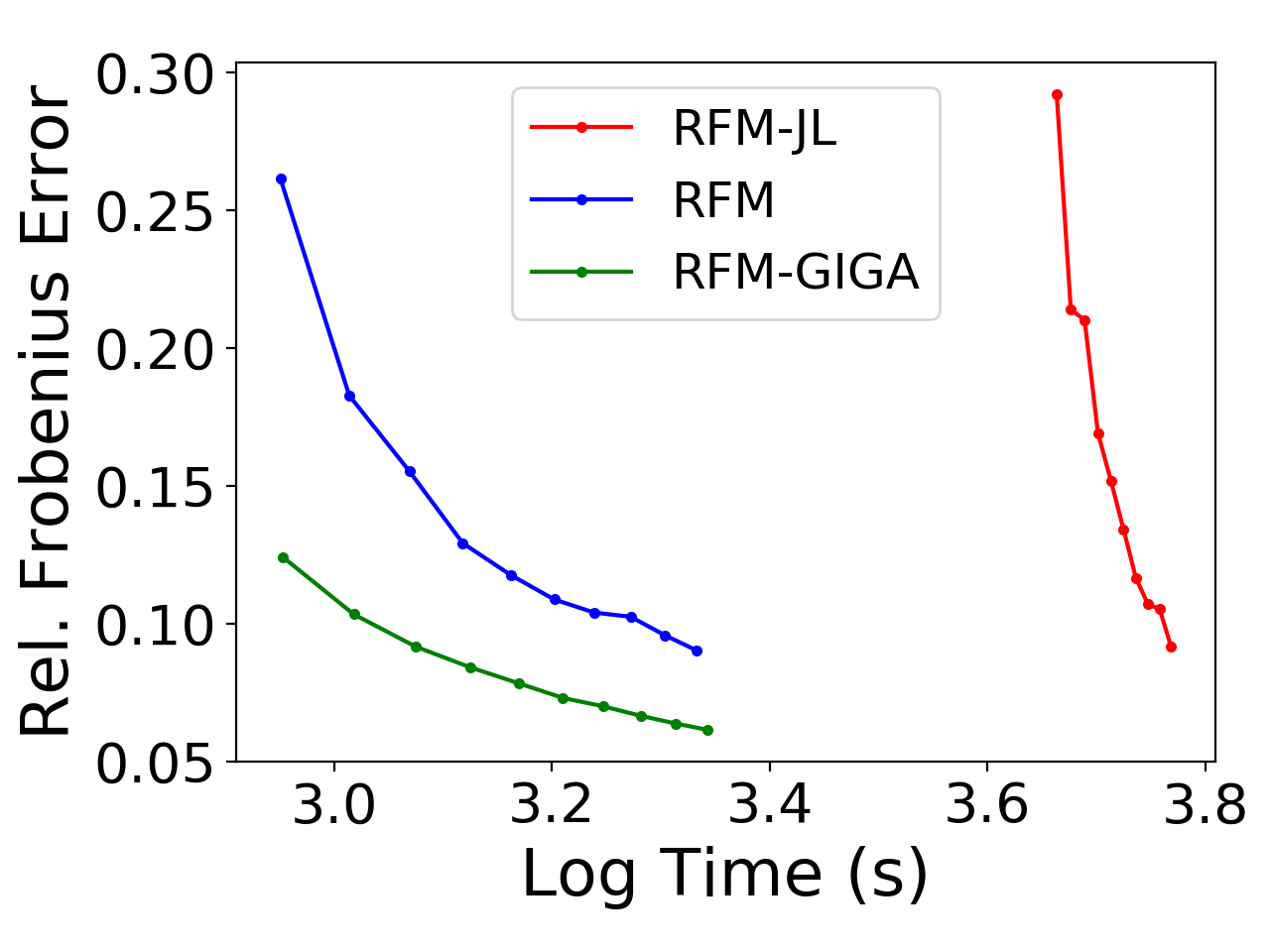}\par
\end{centering}
\caption{Log clock time vs.\ kernel matrix approximation quality on the Criteo data. Lower is better.}
\label{fig:criteo_kern}
\end{figure}

To demonstrate the computational scalability of RFM-GIGA, we also plot the relative kernel matrix approximation error versus computation time for the Criteo dataset, which consists of over 50 million data points. Before random feature projection and training, we used sparse random projections \citep{sparse_rand_projs} to reduce the input dimensionality to 250 dimensions (due to memory constraints). We set $\Jup = 5000$ and $S = 2 \times 10^4$ as before, and let $J$ vary between $10^2$ and $10^3$. The results of this experiment in \cref{fig:criteo_kern} suggest that RFM-GIGA provides a significant improvement in performance over both RFM and RFM-JL. Note that RFM-JL is very expensive in this setting---the up-projection step requires computing a $5 \times 10^8$ by $5\times 10^3$ feature matrix---explaining its large computation time relative to RFM and RFM-GIGA. For test-set classification, all the methods performed the same for all choices of $J$ (accuracy of $0.74 \pm 0.001$), so we do not provide the runtime vs.\ classification accuracy plot. This result is likely due to our compressing the $10^6$-dimensional feature space to 250 dimensions, making it hard for the SVM classifier to properly learn.


Given the empirical advantage of our proposed method, we next focus on understanding (1) if $S$ can be set much smaller than $\Omega(\Juppow{2}(\log \Jup)^2))$ in practice and (2) if we can get an exponential compression of $\Jup$ in practice as \cref{thm:approx_norm} and \cref{thm:asym_comp_coef} guarantee. 

To test the impact of $S$ on performance, we fixed $\Jup$ = 5,000, and we let $S$ vary between $10^2$ and $10^6$. \Cref{fig:S_vs_error} shows what the results in  \cref{fig:kern_errors} would have looked like had we chosen a different $S$. We clearly see that after around only $S$ = 10,000 there is a phase transition such that increasing S does not further improve performance.  

To better understand if we actually see an exponential compression in $\Jup$ in practice, as our theory suggests, we set $\Jup = 10^5$ (i.e.\ very large) and fixed $S$ = 20,000 as before. We examined the HIGGS dataset consisting of $1.1 \times 10^7$ samples, and let $J$ (the number of compressed features) vary between $500$ and $10^4$. Since GIGA can select the same random feature at different iterations (i.e.\ give a feature higher weight), $J$ reached 8,600 after $10^4$ iterations in \cref{fig:J_impact}. \cref{fig:J_impact} shows that for $J \approx 2 \times 10^3$, increasing J further has negligible impact on kernel approximation performance---only $0.001$ difference in relative error. \cref{fig:J_impact} shows that we are able to compress $\Jup$ by around two orders of magnitude. 

Finally, since our proofs of \cref{thm:approx_norm} and \cref{thm:asym_comp_coef} assume Step 8 of \cref{algfeatcompress} is run using Frank-Wolfe instead of GIGA, we compare in \cref{fig:fw_vs_giga} how the results in \cref{fig:kern_errors} change by using Frank-Wolfe instead. \cref{fig:fw_vs_giga} shows that for $J$ small, GIGA has better approximation quality than FW but for larger $J$, the two perform nearly the same. This behavior agrees with the theory and empirical results of \citet{giga}, where GIGA is motivated specifically for the case of high compression. 

\begin{figure}[!htb]
\begin{centering}
\includegraphics[width=.9\linewidth]{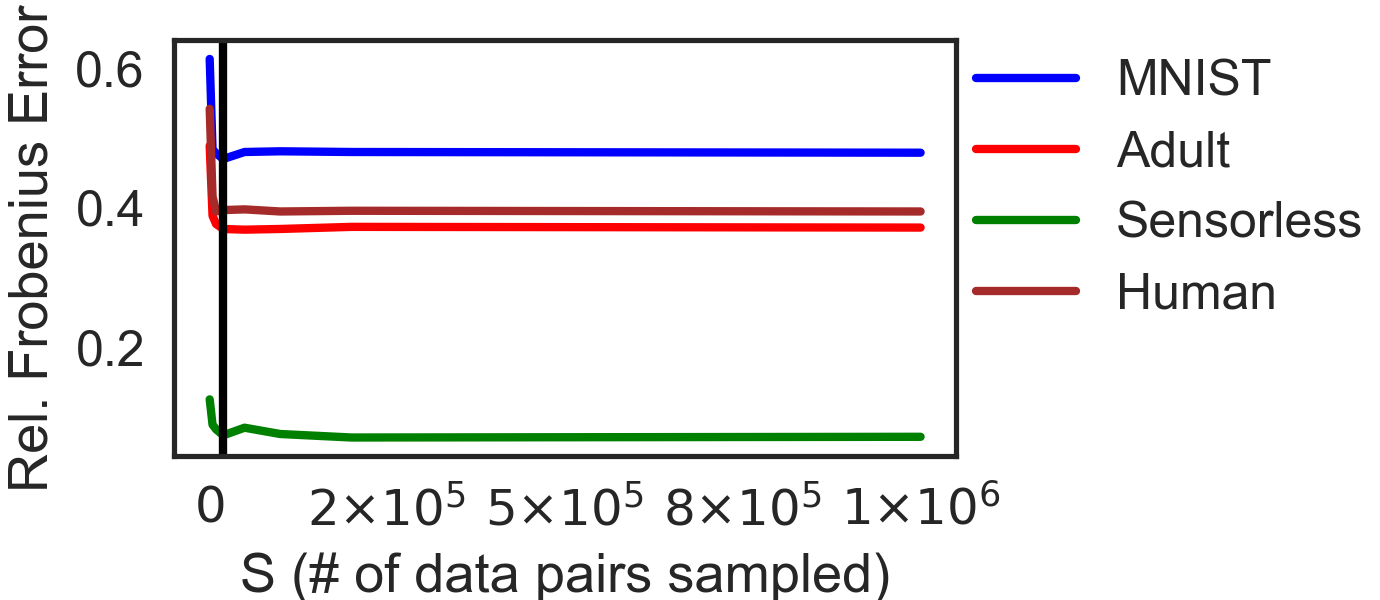}\par
\end{centering}
\caption{We plot the relative Frobenius norm error against $S$ for $\Jup$ fixed at 5,000. The solid black line corresponds to the results found in \cref{fig:kern_errors}.}
\label{fig:S_vs_error}
\end{figure}
\begin{figure}[!htb]
\begin{centering}
\includegraphics[width=.7\linewidth]{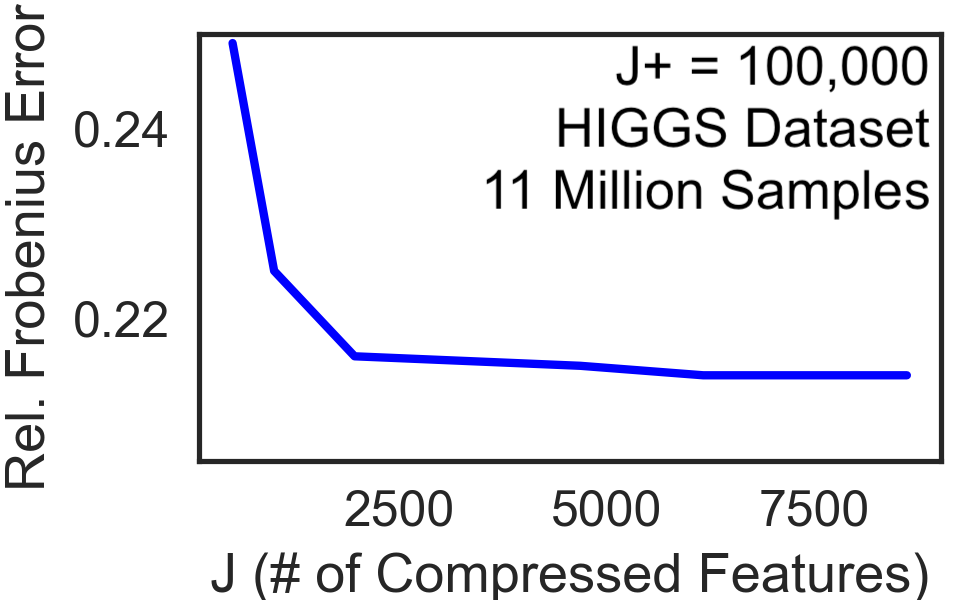}\par
\end{centering}
\caption{Let $S$ = 20,000, $\Jup = 10^5$. We plot the relative Frobenius norm error vs.\ $J$ from $500$ to $10^4$.}
\label{fig:J_impact}
\end{figure}
\begin{figure}[!htb]
\begin{centering}
\includegraphics[width=.7\linewidth]{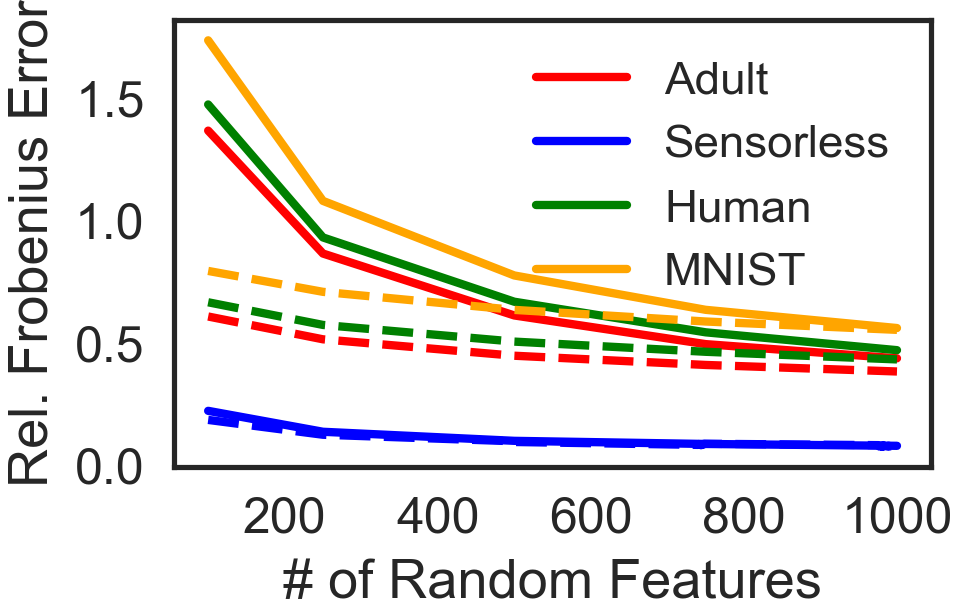}\par
\end{centering}
\caption{The performance of GIGA versus Frank-Wolfe for the experiment described in \cref{fig:kern_errors}. Solid lines correspond to Frank-Wolfe and dashed with GIGA.}
\label{fig:fw_vs_giga}
\end{figure}
\section{Conclusion}
This work presents a new algorithm for scalable kernel matrix approximation.
We first generate a low-rank approximation. We then
find a sparse, weighted subset of the columns of the low-rank factor that minimizes
the Frobenius norm error relative to the original low-rank approximation. Theoretical and empirical
results suggest that our method provides a substantial improvement in scalability
and approximation quality over past techniques. Directions for future 
work include investigating the effects of variance reduction
techniques for the up-projection, using a similar compression technique on features
generated by the Nystr\"om method \citep{nyst_seeg}, and transfer learning of feature weights
for multiple related datasets. 



\subsection*{Acknowledgments}
We thank Justin Solomon for valuable discussions. This research was
supported in part by an ARO YIP Award, ONR (N00014-17-1-2072), an NSF
CAREER Award, the CSAIL-MIT Trustworthy AI Initiative, Amazon, and the MIT-IBM Watson AI Lab.


\appendix
\section{Proof of \cref{thm:asym_comp_coef}} \label{app:proof_asym_comp_coef}
The proofs of \cref{thm:approx_norm} and \cref{thm:asym_comp_coef} rely on the main error bound for the \emph{Hilbert coreset construction problem} given in \cref{eq:fw_obj} \citep{hilb_coresets}. We restate this error bound in \cref{lem:fw_convg}, which  depends on several key quantities given below:  
\begin{itemize}
    \item $c_{ls} \defined \frac{1}{\Jup} \cos(\omega_l^T x_{i_s} + b_l) \cos(\omega_l^T x_{j_s} + b_l)$, such that $1 \leq s \leq S$ and $1 \leq l \leq \Jup$
    \item $\hat{\sigma}_j^2 \defined \frac{1}{S} \sum_{s=1}^S c_{js}^2 = \frac{1}{S} \|R_j\|_2^2$
    \item $\hat{\sigma}^2 \defined \left( \sum_{j=1}^{\Jup} \hat{\sigma}_j \right)^2$
\end{itemize}
\begin{ndefn}{\citep{hilb_coresets}} The \emph{Hilbert construction problem} is based on solving the quadratic program,
\begin{equation} \label{eq:fw_obj}
\argmin_{w \in \R_+^{\Jup}} \quad \frac{1}{S} \| r - r(w)\|_2^2 \quad \text{s.t. $\sum_{j=1}^{\Jup} w_j \hat{\sigma}_j = \hat{\sigma}$}.
\end{equation}
\end{ndefn}

\begin{rmk}
The minimizer of \cref{eq:fw_obj} is $w^* = (1, \cdots, 1)$ since $r(w^*) = r$. However, the goal is to find a sparse $w$. Instead of adding sparsity-inducing constraints (such as $L_1$ penalties), which would lead to computational difficulties for large-scale problems, \citet{hilb_coresets} minimize \cref{eq:fw_obj} greedily through the Frank-Wolfe algorithm. Frank-Wolfe outputs a sparse $w$ since the sparsity of $w$ is bounded by the number of iterations Frank-Wolfe is run for.     
\end{rmk}

\begin{nlem} \label{lem:fw_convg} \cite[Theorem 4.4]{hilb_coresets}
Solving \cref{eq:fw_obj} with $J$ iterations of Frank-Wolfe satisfies
\begin{equation}
\begin{split}
\frac{1}{S} \|r - r(w) \|_2^2 & \leq \frac{\hat{\sigma}^2 \eta^2 \bar{\eta}^2 \nu^2_J}{\bar{\eta}^2 \nu^{-2(J - 2)} + \eta^2(J - 1)} \\
			& \leq \nu^{2J - 2}_J,
\end{split}
\end{equation}
where $0 \leq \nu_J < 1$. Furthermore, $\nu^2_J = 1 - \frac{d^2}{\sigma^2 \bar{\eta}^2}$ where $d$ is the distance from $r$ to the nearest boundary of the convex hull of $\left \{ \frac{\hat{\sigma}}{\hat{\sigma}_j}R_j \right \}_{j=1}^{\Jup}$ and $\bar{\eta}^2 \defined \frac{1}{S} \max_{i,j \in [\Jup]}\left\|\frac{R_i}{\hat{\sigma}_i} - \frac{R_j}{\hat{\sigma}_j} \right\|^2, 0 \leq \bar{\eta} \leq 2$.
\end{nlem}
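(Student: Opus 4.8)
The plan is to read \cref{lem:fw_convg} as a linear-convergence guarantee for Frank--Wolfe applied to a squared-distance minimization over a polytope, and to reduce \cref{eq:fw_obj} to exactly that form. First I would pass to the rescaled inner product $\inprod{a}{b}_{\mathcal H} \defined \frac1S \sum_s a_s b_s$, under which $\hat\sigma_j^2 = \|R_j\|_{\mathcal H}^2$, so that the normalized atoms $R_j/\hat\sigma_j$ lie on the unit sphere and $\bar\eta$ is literally their diameter (hence $\bar\eta \le 2$). Writing $u_j \defined \frac{\hat\sigma}{\hat\sigma_j} R_j$, the constraint $\sum_j w_j \hat\sigma_j = \hat\sigma$ says precisely that $r(w) = \sum_j w_j R_j$ ranges over $\conv\{u_1,\dots,u_{\Jup}\}$ as $w$ ranges over the feasible set; moreover $r = \sum_j R_j = \sum_j (\hat\sigma_j/\hat\sigma)\, u_j$ is itself a convex combination of the $u_j$, so the feasible minimum of $\frac1S\|r - r(w)\|_2^2 = \|r - r(w)\|_{\mathcal H}^2$ is zero. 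The problem is therefore exactly: minimize the squared $\mathcal H$-distance from $r$ to a point of the polytope $C \defined \conv\{u_j\}$.

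Next I would run Frank--Wolfe with exact line search on $f(x) \defined \|r - x\|_{\mathcal H}^2$ over $C$ and write the per-step decrease in closed form. Since $f$ is quadratic, moving from the iterate $x_t$ toward the linear-minimization vertex $u^\star$ (the vertex maximizing $\inprod{e_t}{u_j - x_t}_{\mathcal H}$, where $e_t \defined r - x_t$) and optimizing the step gives, whenever the optimal step lies in $[0,1]$, the exact identity $f(x_t) - f(x_{t+1}) = \inprod{e_t}{u^\star - x_t}_{\mathcal H}^2 / \|u^\star - x_t\|_{\mathcal H}^2$. The denominator is at most the squared diameter of $C$, namely $\hat\sigma^2 \bar\eta^2$. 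The key geometric step is the lower bound on the numerator: because $r$ lies at $\mathcal H$-distance $d$ from the boundary of $C$, the ball of radius $d$ about $r$ is contained in $C$, so $r + d\, e_t/\|e_t\|_{\mathcal H} \in C$ is a convex combination of the $u_j$; evaluating $\inprod{e_t}{\cdot - x_t}_{\mathcal H}$ on this point gives $\|e_t\|_{\mathcal H}^2 + d\|e_t\|_{\mathcal H}$, and since a vertex maximum dominates any convex combination, $\inprod{e_t}{u^\star - x_t}_{\mathcal H} \ge \|e_t\|_{\mathcal H}^2 + d\|e_t\|_{\mathcal H}$. Keeping only the $d$ term and substituting yields $f(x_{t+1}) \le \left(1 - \frac{d^2}{\hat\sigma^2\bar\eta^2}\right) f(x_t)$, i.e.\ geometric contraction with exactly the stated $\nu_J^2 = 1 - d^2/(\sigma^2\bar\eta^2)$ (writing $\sigma = \hat\sigma$), and unrolling over $J$ steps gives the crude bound $\nu_J^{2J-2}$.

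To obtain the sharper first inequality I would keep, alongside the contraction above, the weaker recursion obtained by retaining instead the $\|e_t\|_{\mathcal H}^2$ term of the same inscribed-ball bound, namely $\inprod{e_t}{u^\star - x_t}_{\mathcal H} \ge \|e_t\|_{\mathcal H}^2 = a_t$ with $a_t \defined f(x_t)$, which yields $a_{t+1} \le a_t - a_t^2/(\hat\sigma^2\bar\eta^2)$. Solving this harmonic-type recursion produces the additive $\eta^2(J-1)$ term in the denominator, while the geometric recursion produces the $\bar\eta^2 \nu_J^{-2(J-2)}$ term; combining the two bounds step by step and simplifying (with $\eta$ encoding the initial residual $\|e_0\|_{\mathcal H}^2 = \hat\sigma^2\eta^2$ after the first vertex is selected) gives the interpolated closed form. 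I expect the main obstacle to be the geometric contraction, and specifically the inscribed-ball lower bound on the Frank--Wolfe gap: this is exactly where strict interiority $d > 0$ is essential, and where exact line search (rather than a fixed step schedule) is needed so the optimal step stays interior. The remaining work---verifying the line-search step lies in $[0,1]$ (clamping otherwise) and algebraically solving the coupled recursion for the refined bound---is routine but must be carried out carefully.
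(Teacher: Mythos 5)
Your reconstruction is correct and follows essentially the same route as the actual proof of this lemma, which the paper does not prove itself but imports verbatim as Theorem 4.4 of \citet{hilb_coresets}: the same reformulation under the scaled inner product so that the feasible set becomes $\conv\left\{ \frac{\hat{\sigma}}{\hat{\sigma}_j} R_j \right\}_{j=1}^{\Jup}$ with $r$ in its relative interior, the same inscribed-ball lower bound on the Frank--Wolfe gap giving the geometric contraction $\nu_J^2 = 1 - d^2/(\hat{\sigma}^2 \bar{\eta}^2)$, and the same interleaving with the sublinear (harmonic) recursion to obtain the interpolated denominator $\bar{\eta}^2 \nu_J^{-2(J-2)} + \eta^2(J-1)$. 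The two points you flagged are indeed artifacts of the restatement rather than gaps: the $\sigma$ in the paper's expression for $\nu_J^2$ should be $\hat{\sigma}$, and the final inequality $\leq \nu_J^{2J-2}$ silently absorbs the prefactor $\hat{\sigma}^2 \eta^2$ coming from the residual after the first vertex selection.
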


We prove \cref{thm:asym_comp_coef} first since the main idea is captured in this proof. The proof of \cref{thm:approx_norm} is more involved since we must use a number of concentration bounds to justify subsampling only $S$ datapoint pairs instead of all $\frac{N(N-1)}{2}$ possible datapoint pairs.
Both proofs will also depend on the following constants.
\begin{itemize}
    \item $\sigma_j^2 \defined \frac{1}{V^*} \sum_{s=1}^{V^*} c_{js}^2 = \frac{1}{V^*} \|R_j\|_2^2$
    \item $\sigma^2 \defined \left( \sum_{j=1}^{\Jup} \sigma_j \right)^2$
\end{itemize}
Here, $V^* = \frac{N(N-1)}{2}$, that is when all datapoint pairs above the diagonal are included. $\hat{\sigma}_j^2$ and $\hat{\sigma}^2$ are simply unbiased estimates of $\sigma_j^2$ and $\sigma^2$ based on sampling only $S$ instead of all $V^*$ datapoint pairs.

While \cref{lem:fw_convg} guarantees $0 < \nu_{\Jup} < 1$, it does not guarantee that $\nu_{\Jup} \rightarrow{1}$ as the number of random features $\Jup \rightarrow \infty$. The following Lemma is critical in showing that $\nu_{\Jup}$ does not approach $1$, which would result in no compression.
\begin{nlem} \label{lem:hull_to_spere}
Let $\{x_i\}_{i=1}^K$ be a set of points in $\R^p$ that satisfies \cref{assumptions}(a). Consider the vector $v_{\omega, b} = (\cos(\omega^T x_{i} + b) \cos(\omega^T x_{j} + b))_{i < j, i \in [K-1]} \in \R^{\frac{K(K-1)}{2}}$. Let the unit vector $u_{\omega, b} \coloneqq \frac{v_{\omega, b}}{\| v_{\omega, b} \|}$. If $\omega_j  \overset{\text{i.i.d.}}{\sim} F$ and $b_j \overset{\text{i.i.d.}}{\sim} G$, where $F$ has positive density on all of $\R^p$ and $G$ has positive density on $[0, 2\pi]$, then 
\begin{equation}
\begin{split}
   & \dis\left(\conv \{ u_{\omega_j, b_j} \}_{j=1}^J, \mathcal{S}^{\frac{K(K-1)}{2} - 1}\right) \rightarrow 0  \quad \text{for} \quad J \rightarrow \infty \\
   & \text{s.t.} \ \ \dis(A, B) \coloneqq \max_{a \in A, b \in B} ||a - b||_2.
\end{split}
\end{equation}
Here, $\mathcal{S}^{\frac{K(K-1)}{2} - 1}$ denotes the surface of the unit sphere in $\R^{\frac{K(K-1)}{2}}$.
\end{nlem}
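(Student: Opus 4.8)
The plan is to prove the statement in two stages: first a generic \emph{random-polytope} step, showing that $\conv\{u_{\omega_j,b_j}\}_{j=1}^J$ converges, as $J\to\infty$, to the closed convex hull of the support of the law $\mu$ of the random direction $u_{\omega,b}$; and second a \emph{support-identification} step, showing that this support fills the sphere $\mathcal{S}^{\frac{K(K-1)}{2}-1}$. Combining the two yields that $\conv\{u_{\omega_j,b_j}\}$ approaches $\conv(\mathcal{S}^{\frac{K(K-1)}{2}-1})$, i.e.\ the closed unit ball, so that every point of the sphere is eventually approximated by a point of the random hull, which is the content of $\dis(\cdot,\cdot)\to 0$.

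For the first stage I would argue as follows. Since $F$ and $G$ have strictly positive densities (\cref{assumptions}(b) supplies this for $F=Q$), the pair $(\omega,b)$ has full support $\R^p\times[0,2\pi]$, and $(\omega,b)\mapsto u_{\omega,b}$ is continuous wherever $v_{\omega,b}\neq 0$; hence $\mu$ charges every relatively open subset of its support with positive probability. A Borel--Cantelli / covering argument then shows that, almost surely, for every $\varepsilon>0$ the finite sample $\{u_{\omega_j,b_j}\}_{j\le J}$ is eventually an $\varepsilon$-net of $\mathrm{supp}\,\mu$. Taking convex hulls, together with the fact that the Hausdorff distance between convex hulls is bounded by the Hausdorff distance between the generating sets, upgrades this to convergence $\conv\{u_{\omega_j,b_j}\}\to\overline{\conv}\,(\mathrm{supp}\,\mu)$.

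The second stage is the crux and the step I expect to be the main obstacle. Writing $\theta_k=\omega^Tx_k+b$ and applying the product-to-sum identity, each coordinate becomes $(v_{\omega,b})_{ij}=\tfrac12\cos(\omega^T(x_i-x_j))+\tfrac12\cos(\omega^T(x_i+x_j)+2b)$, so the behaviour of $u_{\omega,b}$ is governed by the phase vector $\theta=X\omega+b\mathbf{1}$ reduced modulo $2\pi$. By the Kronecker--Weyl equidistribution theorem, $\theta \bmod 2\pi$ equidistributes over the subtorus of $\mathbb{T}^K$ cut out by the integer relations $\{m\in\mathbb{Z}^K:\sum_i m_i=0,\ \sum_i m_i x_i=0\}$, and \cref{assumptions}(a)---which forces all the frequency vectors $x_i-x_j$ and $x_i+x_j$ to be distinct---rules out the low-order relations that would otherwise collapse this torus. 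The remaining task is to transfer ``the phases fill the torus'' into ``the normalized products $(\cos\theta_i\cos\theta_j)_{i<j}$ are dense on the sphere.'' I would attempt this by contradiction via separation: if some $\theta^\star\in\mathcal{S}^{\frac{K(K-1)}{2}-1}$ were bounded away from the closed convex hull $C$ of the directions, there would be a unit functional $\langle\cdot,\eta\rangle$ with $\eta=\theta^\star$ whose supremum over the image of $u_{\omega,b}$ is strictly $<1=\langle\theta^\star,\eta\rangle$, so the argument reduces to exhibiting phases $\theta$ along which $v_{\omega,b}/\|v_{\omega,b}\|\to\theta^\star$. Making this last density precise is genuinely delicate, since the quadratic map $\theta\mapsto(\cos\theta_i\cos\theta_j)_{i<j}$ constrains its image to a real-analytic subvariety; it is exactly here that the distinctness in \cref{assumptions}(a) must be invoked to keep the image (and hence $C$) from lying in a proper sub-sphere, and quantitatively bounding the gap from $C$ to the boundary is what ultimately feeds the lower bound on $d$ in \cref{lem:fw_convg} used by \cref{thm:approx_norm,thm:asym_comp_coef}.
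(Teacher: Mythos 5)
Your Stage 1 --- that the hull of i.i.d.\ samples converges to the closed convex hull of the support of the law $\mu$ of $u_{\omega,b}$ --- is correct and corresponds to what the paper compresses into ``standard convexity arguments.'' The genuine gap is Stage 2. Since points of $\mathcal{S}^{\frac{K(K-1)}{2}-1}$ are extreme points of the unit ball, the hull can approach every sphere point only if the samples themselves become dense in the sphere; so identifying $\mathrm{supp}\,\mu$ with the whole sphere is not a technical appendix but the entire content of the lemma, and your proposal leaves exactly this step unproven: you say you ``would attempt'' a separation argument, concede the density step is ``genuinely delicate,'' and never exhibit the mechanism by which \cref{assumptions}(a) forces the normalized image near an arbitrary direction. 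The paper does commit to a mechanism: setting $b=0$ and using $\cos\alpha\cos\beta=\tfrac12(\cos(\alpha+\beta)+\cos(\alpha-\beta))$, it reduces the claim to realizing every sign pattern in $\{-1,1\}^{K(K-1)}$ by $\omega\mapsto\big(\mathrm{sign}\cos(\omega^T(x_i+x_j)),\,\mathrm{sign}\cos(\omega^T(x_i-x_j))\big)_{i<j}$, which it obtains from the shattering (infinite VC dimension) property of cosine classifiers applied to the $K(K-1)$ points $x_i\pm x_j$ --- distinct precisely by \cref{assumptions}(a) --- and then passes from sign patterns to full support by continuity. One may question the rigor of that final continuity step, but it is a concrete argument where your write-up has a placeholder; as submitted, your proposal does not prove the lemma.

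Two further problems make the gap serious rather than cosmetic. First, your Kronecker--Weyl step misstates what \cref{assumptions}(a) gives: it asserts only that the vectors $x_i\pm x_j$ are pairwise distinct, and this does not exclude integer relations such as $x_1+x_2-2x_3=0$ (note $2x_3=x_3+x_3$ is not a pairwise sum with $i<j$), so the relation lattice need not be trivial and the phase orbit need not be dense in the full torus; the claim that (a) ``rules out the low-order relations'' is unsupported. Second, the obstruction you flag is real and, by your own route, fatal: the normalized image is (the projection of the closure of) a semianalytic graph over a $K$-dimensional torus, hence a subanalytic set of dimension at most $K$, and no such set can exhaust a sphere of dimension $\frac{K(K-1)}{2}-1>K$ once $K\geq 4$. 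A proof plan whose target is ``$\mathrm{supp}\,\mu$ is the whole sphere'' must therefore either refute this dimension count or replace the target with something weaker; recording the tension, as you do, resolves neither. The paper's sign-pattern/VC argument is precisely the ingredient your proposal lacks --- it is the paper's answer (terse and itself debatable, since hitting every orthant is weaker than density) to the question you leave open.
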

\begin{proof}
By construction, each unit vector $u_i \coloneqq u_{\omega_i, b_i}$ lies on the boundary of the unit sphere in $\R^{\frac{K(K-1)}{2}}$. Hence, $F, G$ induce a distribution on $\mathcal{S}^{\frac{K(K-1)}{2} - 1}$. It suffices to show $\mathcal{S}^{\frac{K(K-1)}{2} - 1}$ has strictly positive density everywhere since, as $J \rightarrow \infty$, any arbitrarily small neighborhood around a collection of points that cover $\mathcal{S}^{\frac{K(K-1)}{2} - 1}$ will be hit by some $u_i$ with probability 1. By standard convexity arguments, the convex hull of the $u_i$ will arbitrarily approach $\mathcal{S}^{\frac{K(K-1)}{2} - 1}$ by taking the radius of the neighborhoods to zero. We now show $\mathcal{S}^{\frac{K(K-1)}{2} - 1}$ has strictly positive density everywhere. Since $u_i$ is the normalized vector of $v_i \coloneqq v_{\omega_i, b_i}$ and each component of $v_i$ is between $-1$ and 1, it suffices to show, by the continuity of the cosine function, that for any $a \in \{-1, 1 \}^{\frac{K(K-1)}{2}}$ there exist some $\omega_i, b_i$ such that $\text{sign}(v_i) \coloneqq (\text{sign}(v_{il}))_{l \in \frac{K(K-1)}{2}}$ equals $a$. Recall that
\begin{equation} \label{eq:cos_form}
    \cos(a)\cos(b) = \frac{1}{2}(\cos(a + b) + \cos(a - b)).
\end{equation}
Take $b_i = 0$. Then, \Cref{eq:cos_form} implies $v_{il} = \frac{1}{2}(\cos(\omega_i^T(x_{i_l} + x_{j_l}) + \cos(\omega^T(x_{i_l} - x_{j_l}))$. Consider the vector $\tilde{v}_i = (\cos(\omega_i^T(x_{i_l} + x_{j_l}), \cos(\omega_i^T(x_{i_l} - x_{j_l}))_{l \in \frac{K(K-1)}{2}}\in \R^{K(K-1)}$. It suffices to show that for any $\tilde{a} \in \{-1, 1\}^{K(K-1)}$, there exists an $\omega_i$ such that $\text{sign}(\tilde{v}_i) = \tilde{a}$. Recall that the cosine function has infinite \emph{VC dimension}, namely that for any labeling $y_1, \cdots, y_M \in \{-1, 1\}$ of distinct points $x_1, \cdots x_M \in \R^p$, there exists an $\omega^*$ such that $\text{sign}(\cos((\omega^*)^T x_m)) = y_m$. Take $M = K(K-1)$, $y_m = \tilde{a}_m$, $x_m = x_{i_m} + x_{j_m}$, and $x_{m+1} = x_{i_m} - x_{j_m}$. Since all the $x_m$ are distinct by \cref{assumptions}(a), we can find an $\omega_i$ such that $\text{sign}(\tilde{v}_i) = \tilde{a}$ as desired.
\end{proof}
We now prove \cref{thm:asym_comp_coef}.
\begin{proof}
Each $R_j \in \R^{\frac{N(N-1)}{2}}$ and the $R_j$'s are i.i.d. since each $\omega_j$ is drawn i.i.d. from $Q$. The induced Hilbert norm $\| \cdot \|_H$ of each $R_j$ is given by $\| R_j \|_H^2 =  \frac{2}{N(N-1)} \| R_j \|^2_2$ \citep{hilb_coresets}. Hence, $\tilde{R_j} \coloneqq \frac{R_j}{\sigma_j}$ is a unit vector in the vector space with norm $\|  \cdot \|_H$. By \cref{lem:hull_to_spere}, 
\begin{equation}
\begin{split}
   & \text{d}\left(\text{ConvexHull}\{\tilde{R}_j \}_{j=1}^{\Jup}, \mathcal{S}^{\frac{N(N-1)}{2} - 1}\right) \rightarrow 0 \\
\end{split}
\end{equation}
Let $\tilde{r} \coloneqq \frac{1}{\sigma} \sum_{j=1}^{\Jup} \sigma_j \tilde{R_j} \in \conv\{\tilde{R}_j \}_{j=1}^{\Jup}$ and observe that $\tilde{r} = \frac{r}{\sigma}$. The distance, which we denote as $d_{\Jup}$, between $\tilde{r}$ and the $\text{ConvexHull}\{  \tilde{R}_j \}_{j=1}^{\Jup}$ approaches $1 - \| \tilde{r} \|_H$ since the $\text{ConvexHull}\{ \tilde{R}_j \}_{j=1}^{\Jup}$ approaches $\mathcal{S}^{\frac{N(N-1)}{2} - 1}$. Hence,
\begin{equation} \label{eq:djplus}
    \lim_{\Jup \rightarrow{\infty}} d_{\Jup} = 1 - \lim_{\Jup \rightarrow{\infty}} \| \tilde{r} \|_H = 1 - \frac{\lim_{\Jup \rightarrow{\infty}} \| r \|_H}{\lim_{\Jup \rightarrow{\infty}} \sigma}. 
\end{equation}
Now,
\begin{equation}
   r_{s} = \frac{1}{\Jup}\sum_{j=1}^{\Jup} c_{js} \overset{\Jup \rightarrow \infty}{\longrightarrow} k(x_{i_s}, x_{j_s}). 
\end{equation}
Hence,  as $\Jup \rightarrow \infty$,
\begin{equation} \label{eq:limit_r}
    \| r \|_H \rightarrow \sqrt{\frac{2}{N(N-1)} \sum_{i < j} (k(x_i, x_j))^2}.
\end{equation}
Now, 
\begin{equation} \label{eq:sigma_limit}
\begin{split}
    \sigma & =  \sum_{j=1}^{\Jup} \sigma_j \\
    & = \sum_{j=1}^{\Jup}  \sqrt{ \frac{1}{V^*} \sum_{s=1}^{V^*} c_{js}^2} \\ 
    & = \sum_{j=1}^{\Jup}  \sqrt{ \frac{1}{V^*} \sum_{s=1}^{V^*}  \frac{1}{\Juppow{2}} \cos^2(\omega_j^T x_{i_s} + b_j) \cos^2(\omega_j^T x_{j_s} + b_j)} \\
    & = \frac{1}{\Jup} \sum_{j=1}^{\Jup}  \sqrt{ \frac{1}{V^*} \sum_{s=1}^{V^*} \cos^2(\omega_j^T x_{i_s} + b_j) \cos^2(\omega_j^T x_{j_s} + b_j)} \\
    & = \sqrt{\frac{2}{N(N-1)}} \frac{1}{\Jup} \sum_{j=1}^{\Jup} \| (\cos(\omega^T_jx_m + b_j) \cos(\omega^T_j x_n + b_j))_{m < n} \|_2 \\
    & \rightarrow \sqrt{\frac{2}{N(N-1)}} \E_{\omega, b} \| (\cos(w^Tx_m + b) \cos(w^Tx_n + b))_{m < n} \|_2
\end{split}
\end{equation}

If $x \neq y$ and $w \neq 0$, then
\begin{equation} \label{eq:strict_ineq}
\begin{split}
    k(x, y) & = \E_{\omega, b} \cos(w^Tx + b) \cos(w^Ty + b) \\ 
    & < \E_{\omega_i, b} |\cos(w^Tx + b) \cos(w^Ty + b)|.
\end{split}
\end{equation}
by Jensen's inequality. Hence,  \cref{eq:strict_ineq} and \cref{assumptions}(a-b) together imply
\begin{equation}
    \frac{\lim_{\Jup \rightarrow{\infty}} \| r \|_2}{\lim_{\Jup \rightarrow{\infty}} \sigma} < 1.
\end{equation}
By \cref{eq:limit_r} and \cref{eq:sigma_limit},
\begin{equation} \label{eq:up_bound_frob}
    \frac{\lim_{\Jup \rightarrow{\infty}} \| r \|_H}{\lim_{\Jup \rightarrow{\infty}} \sigma} \leq \frac{\|K \|_F}{ \E_{\omega, b} \| u(\omega, b) \|_2},
\end{equation}
where $u(\omega, b)$ is defined in \cref{thm:asym_comp_coef}. \Cref{lem:fw_convg} says that $\nu^2_{\Jup} = 1 - \frac{d^2}{\sigma^2 \bar{\eta}^2}$, where $d$ is the distance from $r$ to the nearest boundary of the convex hull of $\left \{ \frac{\sigma}{\sigma_j} R_j \right \}_{j=1}^{\Jup}$. Hence, $d = \sigma d_{\Jup}$ and $\nu^2_{\Jup} = 1 - \frac{ d_{\Jup}^2}{\bar{\eta}^2}$. \cref{eq:djplus} and \cref{eq:up_bound_frob} together imply,
\begin{equation}
    \liminf_{\Jup \rightarrow{\infty}} d_{\Jup} \leq 1 - \frac{\|K \|_F}{ \E_{\omega, b} \| u(\omega, b) \|_2}.
\end{equation}
Therefore, since $0 \leq \bar{\eta}^2 \leq 2$ by \cref{lem:fw_convg}, 
\begin{equation}
\begin{split}
    \limsup_{\Jup \rightarrow{\infty}} \nu_{\Jup}^2 
    &\leq \limsup_{\Jup \rightarrow{\infty}} 1 - \frac{d_{\Jup}^2}{2} \\
    &= 1 - \liminf_{\Jup \rightarrow{\infty}} \frac{d_{\Jup}^2}{2} \\
    &\leq 1 - \frac{\left(1 - \frac{\|K \|_F}{ \E_{\omega, b} \| u(\omega, b) \|_2}\right)^2}{2}. 
\end{split}
\end{equation}
\end{proof}

\section{Proof of \cref{thm:approx_norm}} \label{A:proof_approx_inner}

The following technical lemma is needed to derive the probability bound in \cref{thm:approx_norm}.
\begin{nlem} \label{lem:ratio_bound}
Suppose $\frac{\sigma^2}{\Juppow{2}\sigma_i^2} \leq M$ for some $1 \leq M < \infty$ for all $i \in [\Jup]$. For $S \geq 8 \frac{M^2}{\sigma^4} \log \left ( \frac{2\Jup}{\delta^2} \right)$ 
\begin{equation}
\pr{\frac{\hat{\sigma}^2}{\Juppow{2}\hat{\sigma_i}^2} \geq  5M } \leq \delta
\end{equation}
for all $i  \in [\Jup]$.
\end{nlem}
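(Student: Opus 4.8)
The plan is to regard the feature draws $\{(\omega_j,b_j)\}_{j=1}^{\Jup}$ as fixed and to treat all randomness as coming from the i.i.d.\ subsampling of the $S$ datapoint pairs $(i_s,j_s)$. Under this conditioning $\sigma_j^2$ and $\sigma^2$ are deterministic, and for each fixed feature index $j$ the summands $c_{js}^2$, $s=1,\dots,S$, are i.i.d.\ random variables lying in $[0,\Juppow{-2}]$ (each is $\Juppow{-2}$ times a product of two squared cosines) with mean exactly $\sigma_j^2$, since $\sigma_j^2$ is by definition the average of $c_{j\cdot}^2$ over all $V^*=\tfrac{N(N-1)}2$ pairs. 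Thus $\hat\sigma_j^2=\tfrac1S\sum_s c_{js}^2$ is an unbiased empirical mean, and Hoeffding's inequality yields $\pr{|\hat\sigma_j^2-\sigma_j^2|\ge t}\le 2\exp(-2St^2\Juppow{4})$ for every $t>0$.

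The quantitative engine of the proof is the calibration $t=\frac{\sigma^2}{4M\Juppow{2}}$. With this choice the Hoeffding exponent simplifies to $2St^2\Juppow{4}=S\sigma^4/(8M^2)$, so the hypothesis $S\ge 8\frac{M^2}{\sigma^4}\log\frac{2\Jup}{\delta^2}$ drives the per-feature failure probability down to $\delta^2/\Jup$. A union bound over the $\Jup$ features then gives that, with probability at least $1-\delta^2\ge 1-\delta$, the event $\mathcal E=\{\,|\hat\sigma_j^2-\sigma_j^2|<t\text{ for all }j\,\}$ occurs. The remaining task is to check deterministically that $\mathcal E$ forces the ratio bound for every $i$ at once.

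On $\mathcal E$ I would control the numerator and denominator of $\hat\sigma^2/(\Juppow{2}\hat\sigma_i^2)$ separately. For the numerator, subadditivity of the square root gives $\hat\sigma_j\le\sqrt{\sigma_j^2+t}\le\sigma_j+\sqrt t$, whence $\hat\sigma=\sum_j\hat\sigma_j\le\sigma+\Jup\sqrt t=\sigma\bigl(1+\tfrac{1}{2\sqrt M}\bigr)$. For the denominator I would invoke the hypothesis $\frac{\sigma^2}{\Juppow{2}\sigma_i^2}\le M$, i.e.\ $\sigma_i^2\ge\frac{\sigma^2}{M\Juppow{2}}$, to get $\hat\sigma_i^2\ge\sigma_i^2-t\ge\frac{\sigma^2}{M\Juppow{2}}-\frac{\sigma^2}{4M\Juppow{2}}=\frac{3\sigma^2}{4M\Juppow{2}}$. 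Dividing and using $M\ge1$, so that $\bigl(1+\tfrac{1}{2\sqrt M}\bigr)^2\le\tfrac94$, produces $\frac{\hat\sigma^2}{\Juppow{2}\hat\sigma_i^2}\le\frac{4M}{3}\bigl(1+\tfrac{1}{2\sqrt M}\bigr)^2\le 3M\le 5M$, simultaneously for all $i$, completing the argument.

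The crux, and what pins down the constants, is converting \emph{additive} concentration of the variances $\hat\sigma_j^2$ into \emph{multiplicative} control of the ratio of standard deviations. The denominator $\hat\sigma_i^2$ could a priori collapse toward zero under a downward fluctuation, so the hypothesis lower bound $\sigma_i^2\ge\sigma^2/(M\Juppow{2})$ is indispensable: it certifies that the true variance is large enough that an additive perturbation of size $t\propto\sigma^2/(M\Juppow{2})$ only shrinks it by a constant fraction. Choosing $t$ to be a sufficiently small multiple (here one quarter) of that lower bound is exactly what keeps both the numerator inflation and the denominator deflation bounded, and it is this same $t$ that produces the stated sample complexity $S$. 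The constant $5$ is comfortable slack, since the plan in fact delivers $3M$.
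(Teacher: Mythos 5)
Your proposal is correct and follows essentially the same route as the paper's proof: Hoeffding's inequality applied to the i.i.d.\ bounded summands $c_{js}^2$, the same calibration $t=\sigma^2/(4M\Juppow{2})$, a union bound over the $\Jup$ features, and the same deterministic numerator/denominator argument ($\hat\sigma\le\sigma+\Jup\sqrt t$ and $\hat\sigma_i^2\ge\sigma_i^2-t$) yielding the bound $\tfrac{4M}{3}\bigl(1+\tfrac{1}{2\sqrt M}\bigr)^2\le 3M\le 5M$. Your phrasing of the final step as a purely deterministic implication on the good event is cleaner than the paper's conditional-probability formulation, but it is the same argument.
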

\begin{proof}
Notice that 
\begin{equation*}
\begin{split}
\E_{i_s, j_s} \hat{\sigma_l}^2 &= \frac{1}{S} \sum_{s=1}^S \E_{i_s, j_s} c_{ls}^2 \\
						&= \frac{1}{N^2} \sum_{s=1}^{N^2} c_{ls}^2 \\
                        & = \sigma_l^2.
\end{split}
\end{equation*}
Hence, $\hat{\sigma_l}^2$ is an unbiased estimator of $\sigma_l^2$. Each $c_{ls}^2 \leq \frac{1}{\Jup^2}$ is a bounded random variable, and the collection of random variables $\{c_{ls}^2\}_{s=1}^S$ are i.i.d.\ since $i_s, j_s \overset{\text{i.i.d.}}{\sim} \pi$. Hence, by Hoeffding's inequality,
\begin{equation} \label{eq:sig_hat_i_bound}
\pr{|\hat{\sigma}^2_l - \sigma^2_l| \geq t } \leq 2\exp \left ( -2 S \Juppow{4} t^2 \right ).
\end{equation}
Define the event $A_t \defined \cup_{i=1}^{\Jup} \{ |\hat{\sigma}^2_i - \sigma^2_i| < t  \}$ and pick $t$ such that $t \leq \min_{i \in [\Jup]} \sigma_i^2 $. Since $\sigma_i^2 \geq \frac{\sigma^2}{M}$ by assumption, it suffices to pick $0 < t \leq \frac{\sigma^2}{M}$. Conditioned on $A_t$, $\hat{\sigma}_i \leq \sqrt{\sigma_i^2 + t} \leq \sigma_i + \sqrt{t}$, which implies $\hat{\sigma}^2 \leq (\sigma + \Jup \sqrt{t})^2$. Therefore, 
\begin{equation}
\begin{split} \label{eq:bound_on_ratio}
\pr{\frac{\hat{\sigma}^2}{\Juppow{2}\hat{\sigma_i}^2} \geq cM } & = \pr{A_t^c \cup \left \{ \frac{\hat{\sigma}^2}{\Juppow{2}\hat{\sigma_i}^2} \geq cM \right \} } +  \pr{A_t \cup \left \{ \frac{\hat{\sigma}^2}{\Juppow{2}\hat{\sigma_i}^2} \geq cM \right \} } \\
		& \leq \pr{A_t^c} + \pr{A_t, \left \{ \frac{\hat{\sigma}^2}{\Juppow{2}\hat{\sigma_i}^2} \geq cM \right \}} \\
        & \leq \pr{A_t^c} + \pr{\frac{\hat{\sigma}^2}{\Juppow{2}\hat{\sigma_i}^2} \geq cM \mid A_t} \\
        & \leq \pr{A_t^c} + \pr{\frac{(\sigma + \sqrt{t}\Jup)^2}{\Juppow{2}(\sigma_i^2 - t)} \geq cM \mid A_t}.
\end{split}
\end{equation}
Notice that $\pr{\frac{(\sigma + \sqrt{t}\Jup)^2}{\sigma_i^2 - t} \geq cM^2 \mid A_t}$ is either 0 or 1 since $\sigma_i$ and $\sigma$ are constants. We pick $t$ so that this probability is 0. To pick $t$, notice that,
\begin{equation}
\begin{split}
\frac{(\sigma + \sqrt{t}\Jup)^2}{\Juppow{2}(\sigma_i^2 - t)} & = \frac{\left ( \frac{\sigma}{\sigma_i} + \frac{\sqrt{t}\Jup}{\sigma_i} \right)^2}{\Juppow{2}(1 - \frac{t}{\sigma_i^2})} \\
		& \leq \frac{\left( \Jup\sqrt{M} + \frac{\Jup \sqrt{tM}\Jup}{\sigma} \right)^2}{\Juppow{2}(1 - \frac{t}{\sigma_i^2})} \\
		& \leq \frac{M \left (1  + \frac{\sqrt{t}\Jup}{\sigma} \right)^2}{1 - \frac{M \Juppow{2} t}{\sigma^2}},
\end{split}
\end{equation}
where the last inequality holds as long as $0 < t < \frac{\sigma^2}{M\Juppow{2}}$ and follows by noting that $\frac{1}{\sigma_i^2} \leq \frac{M\Juppow{2}}{\sigma^2}$ by assumption. Pick $t = \frac{\sigma^2}{4\Juppow{2} M}$. Since $0 \leq \sigma \leq 1$, this choice of $t$ implies $\frac{M \left( 1 + \frac{\sqrt{t}\Jup}{\sigma}\right)^2}{1 - \frac{M\Juppow{2} t}{\sigma^2}} \leq 5M$. Hence, for $c = 5$ and this choice of $t$, $\pr{\frac{(\sigma + \sqrt{t}\Jup)^2}{\Juppow{2}(\sigma_i^2 - t)} \geq 5M \mid A_t} = 0$. Combining \cref{eq:bound_on_ratio} and \cref{eq:sig_hat_i_bound}, we have by a union bound that,
\begin{equation}
\pr{\frac{\hat{\sigma}^2}{\Juppow{2}\hat{\sigma_i}^2} \geq 5M } \leq 2\Jup \exp \left ( -\frac{1}{8}S \frac{\sigma^4}{M^2} \right ),
\end{equation}
for all $i \in [\Jup]$. Solving for $S$ by setting the right hand side above to $\delta$ yields the claim.
\end{proof}
We have all the pieces to prove \cref{thm:approx_norm}. We follow the proof strategy in \cite[Theorem 5.2]{hilb_coresets}.
\begin{proof} 
Let $R^* = \left [ \zup_{1}^T \circ \zup_{1}^T, \cdots \zup_{{N-1}}^T \circ \zup_{N}^T, \zup_{N}^T \circ \zup_{N}^T \right] \in \R^{\Jup \times N^2}$. Notice,
\begin{equation} \label{eq:exact_norm}
 \frac{1}{N^2}\|\Zup\Zup^T - Z(w)Z(w)^T\|^2_F = (1 - w)^T \frac{R^*}{N} \frac{R^{*T}}{N}(1 - w).
\end{equation}
We approximate \cref{eq:exact_norm} with $(1 - w)^T \frac{R}{\sqrt{S}} \frac{R^{T}}{\sqrt{S}}(1 - w)$ and bound the error. Suppose
\begin{equation*}
    D^* \defined \max_{i,j \in [\Jup]} \left| \left (\frac{R^*}{N} \frac{R^{*T}}{N} \right)_{ij} - \left (\frac{R}{\sqrt{S}} \frac{R^{T}}{\sqrt{S}} \right)_{ij} \right| \leq \frac{\err}{2}.
\end{equation*}
Then,
\begin{equation}
\begin{split}
(1 - w)^T \frac{R^*}{N} \frac{R^{*T}}{N} (1 - w) - (1 - w)^T \frac{R}{\sqrt{S}} \frac{R^{T}}{\sqrt{S}} (1 - w) & \leq \sum_{i, j \in [\Jup]} |w_i - 1| |w_j - 1| D^* \\
			& \leq \| w - 1 \|_1^2 \frac{\err}{2}.
\end{split}
\end{equation}
Notice,
\begin{equation}
\begin{split}
\E_{i_s, j_s} \left [ \left (\frac{R}{\sqrt{S}} \frac{R^{T}}{\sqrt{S}} \right)_{ij} \right] & = \E_{i_s, j_s} \left[ \frac{1}{S} \sum_{s=1}^S c_{is}c_{js} \right ] \\
					& = \frac{1}{S} \sum_{s=1}^S E_{i_s, j_s} \left[ c_{is}c_{js} \right ] \\
                    & = E_{i_s, j_s} \left[ c_{is}c_{js} \right ] \\
                    & =  \frac{1}{N^2} \sum_{s=1}^{N^2} c_{is}c_{js} \\
                    & = \left (\frac{R^*}{N} \frac{R^{*T}}{N} \right)_{ij}.
\end{split}
\end{equation}
Hence, the i.i.d.\ collection of random variables $\{c_{is}c_{js}\}_{s=1}^S$ yields an unbiased estimate of $\left (\frac{R^*}{N} \frac{R^{*T}}{N} \right)_{ij}$. Each $c_{is}c_{js}$ is bounded by $\frac{1}{\Juppow{2}}$. Therefore, by Hoeffding's inequality and a simple union bound,   
\begin{equation}
\pr{ D^* \geq \frac{\err}{2}} \leq 2 \Juppow{2} \exp{\left(-2S\Juppow{4} {\err}^2 \right)}.
\end{equation}
Setting the right-hand side to $\frac{\delta^{*}}{2}$ and solving for $\frac{\err}{2}$ implies with probability at least $1 - \frac{\delta^*}{2}$, 
\begin{equation}
\frac{\err}{2}  \leq \frac{1}{\sqrt{S}\Jup^2} \log \left[\frac{4\Juppow{2}}{\delta^*} \right]^{\frac{1}{2}}. 
\end{equation}
Hence, with probability at least $1 - \frac{\delta^{*}}{2}$, 
\begin{equation*}
\begin{split}
\frac{1}{N^2}\|\Zup\Zup^T - Z(w)Z(w)^T\|^2_F & \leq (1 - w)^T \frac{R}{\sqrt{S}} \frac{R^{T}}{\sqrt{S}} (1 - w) + \| 1 - w \|_1^2 \frac{1}{\sqrt{S}\Juppow{2}} \log \left[\frac{4\Juppow{2}}{\delta^*} \right]^{\frac{1}{2}} \\
					& = \frac{1}{S} \|r - r(w) \|_2^2 + \| 1 - w \|_1^2 \frac{1}{\sqrt{S}\Juppow{2}} \log \left[\frac{4\Juppow{2}}{\delta^*} \right]^{\frac{1}{2}}
\end{split}
\end{equation*}
\Cref{lem:fw_convg} implies that there exists a $0 \leq \nu < 1$ such that $\frac{1}{S} \|r - r(w) \|_2^2 \leq \nu^{2J - 2}$. Since $\nu$ depends on the pairs $i_l, j_l$ picked, we can take $\nu^*$ to be the largest $\nu$ possible. Since the set of all possible $S$ pairs is finite, that implies $0 \leq \nu^* < 1$. Hence, setting $J = \frac{1}{2}\log_{\nu^*}\left (\frac{\err}{2} \right ) + 2$ guarantees that $\frac{1}{S} \|r - r(w) \|_2^2 \leq \frac{\err}{2}$ for any collection of drawn $i_l, j_l, 1 \leq l \leq S$. Assume for any $a \in (0, 1]$ and $\delta > 0$, we can find an $M$ such that

\begin{equation} \label{eq:M_delta}
    \pr{\max_j \nicefrac{\sigma^2}{(\Juppow{2} \sigma_j^2)} > M} < a\delta.
\end{equation}

If \cref{eq:M_delta} holds, we may assume $\max_j \nicefrac{\sigma^2}{(\Juppow{2} \sigma_j^2)} < M$ by setting $M$ large enough since we just need a $1- \delta$ probabilistic guarantee. By the polytope constraint in \cref{eq:fw_obj}, $w^{*}_i \leq \frac{\hat{\sigma}}{\hat{\sigma}_i}$ for all $i \in [\Jup]$. Without loss of generality, assume the first $J$ components of $w^*$ can be the only non-zero values since $w^*$ is at least $J$ sparse. For $S \geq 8 \frac{M^4}{\sigma^4} \log \left ( \frac{2\Jup}{\delta^2} \right)$, \cref{lem:ratio_bound} implies with probability at least $1 - \frac{\delta^{*}}{2}$,  
\begin{equation}
\begin{split}
\| 1 - w^{*} \|_1^2 & \leq \left ( \frac{\hat{\sigma}}{\hat{\sigma}_i}J + (\Jup - J) \right )^2 \\
		& \leq \left ( JM \Jup + \Jup) \right )^2 \\
        & \leq (2 JM\sqrt{5} \Jup)^2 \\
        & \leq 10 \Juppow{2} M^2 J^2 \\
        & \leq 10 \Juppow{2} M^2 \frac{(\log \frac{2}{\err})^2}{(\log \nu)^2}
\end{split}
\end{equation}
Therefore, with probability at least $1 - \delta^{*}$, 
\begin{equation}
\frac{1}{N^2}\|\Zup\Zup^T - Z(w)Z(w)^T\|^2_F \leq \frac{\err}{2} + \frac{10M^2(\log \frac{2}{\err})^2}{\sqrt{S}(\log \nu)^2} \log \left[\frac{4\Juppow{2}}{\delta^*} \right]^{\frac{1}{2}}.  
\end{equation}
Finally, setting $S \geq \max \left( \frac{100}{{\err}^2} \left[ M\frac{(\log \frac{2}{\err})}{(\log \nu)} \right]^4 \log \left[\frac{4\Jup^2}{\delta^*} \right] ,8 \frac{M^4}{\sigma^4} \log \left ( \frac{2\Jup}{\delta^2} \right) \right)$ implies $\frac{1}{N^2}\|\Zup\Zup^T - Z(w)Z(w)^T\|^2_F \leq \err$ with probability at least $1 - \delta^{*}$ which matches the rate provided in \cref{thm:approx_norm}. It remains to show \cref{eq:M_delta}. Notice that 
\begin{equation}
    \frac{\sigma}{\Jup \sigma_j} = \frac{1}{\Jup} + \frac{1}{\Jup} \sum_{i \neq j} \tilde{\sigma_{ij}},
\end{equation}
where $\sigma_{ij} \coloneqq \frac{\sigma_i}{\sigma_j}$. Notice that each $\sigma_{ij}$ are i.i.d. for $i \neq j$. Let the $\mu_j = \E \sigma_{ij}$ and $s_j$ be the standard deviation of $\sigma_{ij}$. Since each $\sigma_j$ is i.i.d. that implies $\mu_j$ and $s_j$ are both constant across $j$ so we drop the subscript. By a union bound, it suffices to show for any $\tau > 0$ we can find an $M$ such that 
\begin{equation} \label{eq:m_reduction}
    \pr{\max_{1 \leq j \leq \Jup} \frac{1}{\Jup} \sum_{i \neq j} \tilde{\sigma_{ij}} > M} < \tau.  
\end{equation}
By Chebyshev's inequality, 
\begin{equation} \label{eq:cheb_bound}
    \pr{\frac{1}{\Jup} \sum_{i \neq j} \tilde{\sigma_{ij}} - \mu > \frac{cs}{\Jup}} \leq \frac{1}{c^2}.
\end{equation}
Take $c=\Jup \tau$. Then,
\begin{equation}
    \pr{\frac{1}{\Jup} \sum_{i \neq j} \tilde{\sigma_{ij}} - \mu > \frac{cs}{\Jup}} \leq \frac{1}{\Juppow{2} \tau} < \tau.
\end{equation}
By a union bound, \cref{eq:cheb_bound} implies $$\pr{\max_{1 \leq j \leq \Jup} \frac{1}{\Jup} \sum_{i \neq j} \tilde{\sigma_{ij}} > M} < \frac{1}{\tau \Jup} < \tau$$ for $M = \mu + s\tau$ as desired. 

The proof showing that  $\limsup_{\Jup \rightarrow \infty} \nu_{\Jup} < 1$ is the same as the proof \cref{thm:asym_comp_coef}.

\end{proof}

\section{Runtime analysis of methods} \label{A:runtime_analysis}

The ridge regression and PCA runtimes depend on the number of features used, as specified in \cref{tab:impact_frob_norm}, and therefore follow from the first column of the table. 

First, we show that using RFM with $\Jup = O \left( \frac{1}{\err}\log \frac{1}{\err} \right)$ number of random features ensures that $\frac{1}{N^2}\|K - \hat{K} \|_F^2 = O(\err)$ with high probability. By a union bound, $\pr{\frac{1}{N^2}\|K - \hat{K} \|_F^2 \leq \err} \geq \pr{\max_{i, j \in [N]} | K_{ij} - \hat{K}_{ij} | \leq \sqrt{\err}}$. Now, Claim 1 of \cite{rahimi_rf} implies 
\begin{equation} \label{eq:rahami_bound}
\pr{\max_{i, j \in [N]} | K_{ij} - \hat{K}_{ij} | \geq \sqrt{\err}} = \bigO{\frac{1}{\err}e^{-\Jup \err}}.
\end{equation}
Setting the right-hand side of \cref{eq:rahami_bound} to some fixed probability threshold $\delta^{*}$ implies $\Jup = \bigO{\frac{1}{\err}\log \left(\frac{1}{\err \delta^*}\right)}$. Since $\delta^{*}$ is some fixed constant, $\Jup = \bigO{\frac{1}{\err}\log \frac{1}{\err}}$ number of random features suffices for an $O(\epsilon)$ error guarantee. Hence, it suffices to use $\Jup = O \left( \frac{1}{\err}\log \frac{1}{\err} \right)$ as the up-projection dimension for both RFM-FW and RFM-JL. 

To prove the bounds for RFM-FW, take $S = \Omega(\Juppow{2}(\log \Jup)^2)$. It is straightforward to check that this choice of $S$ satisfies the requirements of \cref{thm:approx_norm}. By \cref{thm:approx_norm}, it suffices to set $J = \bigO{\log \Jup}$ for an $O(\epsilon)$ error guarantee.  Hence, \Cref{algfeatcompress} takes $O(S\Jup \log \Jup)$ time to compute the random feature weights $w$ since Frank-Wolfe has to be run for a total of $O(\log \Jup)$ iterations. Finally, it takes $O(N\log \Jup)$ to apply these $O(\log \Jup)$ weighted random features to the $N$ datapoints. We conclude by proving the time complexity of RFM-JL.  

Denote $\tilde{x_i} \coloneqq (Z_+)_i \in \R^{\Jup}$ as the mapped datapoints from RFM.  Let $A \in \R^{J \times \Jup}$ for $J \leq \Jup$ be a matrix filled with i.i.d.\ $N(0, \frac{1}{J})$ random variables for the JL compression step. Let $f(x) \defined Ax$. It suffices to pick a $J$ such that,
\begin{equation} \label{eq:max_inner_prod}
\pr{\max_{i, j \in [N]} \left | \tilde{x_i}^T\tilde{x_j} - f(\tilde{x_i})^T f(\tilde{x_j})  \right | \geq \sqrt{\err}} \leq  \delta^*
\end{equation}
for RFM-JL. We use the following corollary from \citet[Corollary 2.1]{Jl_inner_note} to bound the above probability.

\begin{nlem} \label{lem:JL_inner_prod}
Let $u, v \in \R^d$ and such that $\|u\| \leq 1$ and $\|v\| \leq 1$. Let $f(x) = Ax$, where $A$ is a $k \times d, k \leq d$ matrix of i.i.d.\ $N(0, \frac{1}{k})$ random variables. Then, 
\begin{equation}
\pr{\mid u^Tv - f(u)^Tf(v) \mid} \leq 4e^{-\frac{1}{4}(\epsilon^2 - \epsilon^3)k}.
\end{equation}
\end{nlem}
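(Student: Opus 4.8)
The plan is to prove the inner-product preservation bound by reducing it to the more familiar \emph{norm} preservation bound via the polarization identity, so that only concentration of $\|f(x)\|^2$ around $\|x\|^2$ for a single fixed vector $x$ is required. First I would record the standard distributional \JL norm bound: for any fixed $x \in \R^d$ and $f(x) = Ax$ with $A$ having i.i.d.\ $N(0, \tfrac{1}{k})$ entries, $\E\|f(x)\|^2 = \|x\|^2$ and
\begin{equation*}
\pr{\bigl| \|f(x)\|^2 - \|x\|^2 \bigr| \geq \err \|x\|^2} \leq 2\exp\left(-\tfrac{k}{4}(\err^2 - \err^3)\right).
\end{equation*}
This holds because, after normalizing $x$ to a unit vector, each row of $A$ applied to $x$ is $N(0,\tfrac1k)$, so $k\|f(x)\|^2 \sim \chi^2_k$, and the exponent $\tfrac{k}{4}(\err^2-\err^3)$ is exactly what the two-sided $\chi^2$ tail (Laplace-transform / Chernoff) bound produces for $\err\in(0,1)$.

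Next I would exploit the linearity of $f$ together with polarization. Since $f(u) \pm f(v) = f(u\pm v)$, we have the exact identity
\begin{equation*}
u^Tv - f(u)^Tf(v) = \tfrac{1}{4}\Bigl[\bigl(\|u+v\|^2 - \|f(u+v)\|^2\bigr) - \bigl(\|u-v\|^2 - \|f(u-v)\|^2\bigr)\Bigr].
\end{equation*}
I would then apply the norm bound above to the two \emph{fixed} vectors $u+v$ and $u-v$ and take a union bound, so that with probability at least $1 - 4\exp\left(-\tfrac{k}{4}(\err^2-\err^3)\right)$ both norms are preserved up to the multiplicative factor $\err$ simultaneously.

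On that good event I would control the additive error using the parallelogram law $\|u+v\|^2 + \|u-v\|^2 = 2\|u\|^2 + 2\|v\|^2$, which gives
\begin{equation*}
\bigl|u^Tv - f(u)^Tf(v)\bigr| \leq \tfrac{\err}{4}\bigl(\|u+v\|^2 + \|u-v\|^2\bigr) = \tfrac{\err}{2}\bigl(\|u\|^2 + \|v\|^2\bigr) \leq \err,
\end{equation*}
where the final inequality uses the hypotheses $\|u\| \leq 1$ and $\|v\| \leq 1$. Combining this with the union-bound failure probability yields the stated result with the constant $4$ out front.

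The only genuinely technical ingredient is the single-vector $\chi^2$ concentration that produces the precise $(\err^2 - \err^3)$ exponent; everything downstream of it is an exact algebraic identity plus a two-term union bound. Consequently I expect the main obstacle to be pinning down that concentration bound with exactly the stated constant rather than a looser one, which is precisely why it is convenient to invoke \citet[Corollary 2.1]{Jl_inner_note} rather than re-derive the $\chi^2$ tail from scratch.
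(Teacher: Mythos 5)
Your argument is correct, and it is the standard proof of this result: the single-vector $\chi^2$/Chernoff bound with exponent $\tfrac{k}{4}(\err^2-\err^3)$, followed by the exact polarization identity $4\bigl(u^Tv - f(u)^Tf(v)\bigr) = \bigl(\|u+v\|^2-\|f(u+v)\|^2\bigr)-\bigl(\|u-v\|^2-\|f(u-v)\|^2\bigr)$ (valid by linearity of $f$), a two-term union bound giving the factor $4$, and the parallelogram law together with $\|u\|\leq 1$, $\|v\|\leq 1$ to convert the relative norm errors into the additive error $\err$. For comparison with the paper: there is nothing to compare against, because the paper supplies no proof of this lemma at all---it imports the statement verbatim from \citet[Corollary 2.1]{Jl_inner_note} and uses it as a black box in the runtime analysis of RFM-JL in \cref{A:runtime_analysis}. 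Your derivation is exactly the argument behind that cited corollary, so it fills in the omitted proof rather than offering an alternative route. Two minor observations: (i) the paper's statement of the lemma is garbled---the display omits the threshold, and your reading that the bound is on $\pr{\bigl|u^Tv - f(u)^Tf(v)\bigr| \geq \err}$ is the correct one; (ii) your application of the norm-preservation bound to $u+v$ and $u-v$ is legitimate precisely because the distributional JL bound holds for each \emph{fixed} vector with randomness only over $A$, a point worth stating explicitly since it is the only place where independence considerations could conceivably trip one up.
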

$\|\tilde{x_i}\|_2 = 1$ since $\tilde{x_i} = \frac{1}{\sqrt{\Jup}}\left( \cos(\omega_1^T x_i + b), \cdots, \cos(\omega_{\Jup}^T x_i + b) \right)$. Hence, we may apply \cref{lem:JL_inner_prod} to $\tilde{x_i}$. By a union bound and an application of \cref{lem:JL_inner_prod}, \cref{eq:max_inner_prod} is bounded by $\bigO{N^2 e^{-J\err}}$. Setting $N^2 e^{-J\err}$ equal to $\delta^*$ and solving for $J$ implies that $J = \Omega\left( \frac{1}{\err} \log \left( \frac{N^2}{\delta^*} \right)\right)$. Hence, $J = \bigO{ \frac{1}{\err} \log N}$. Now, $\bigO{\frac{1}{\err}} = \bigO{\frac{\Jup}{\log \frac{1}{\err}}}$ which implies $J = \bigO{ \frac{\Jup \log N}{\log \frac{1}{\err}}}$. Since $N > \Jup > \bigO{\frac{1}{\err}}$, $J = \Omega(\Jup)$ suffices for an for an $O(\epsilon)$ error guarantee.  While the JL algorithm typically takes $\bigO{N\Jup k}$ time to map a $N \times \Jup$ matrix to a $N \times k$ matrix, the techniques in \citet[Section 3.5]{compact_rf} show that only $\bigO{N\Jup \log J}$ time is required by using the Fast-JL algorithm.    

\section{Impact of kernel approximation} \label{A:downstream}
Here we provide the precise error bound and runtimes for kernel ridge regression, kernel SVM, and kernel PCA when using a low-rank factorization $ZZ^T$ of $K$. We denote $X \subset \R^p$ as the input space and define $c > 0$ such that $K(x, x) \leq c$ and $\hat{K}(x, x) \leq c$ for all $x \in X$. This condition is verified with $c = 1$ for Gaussian kernels for example. All the bounds provided follow from \cite{cortes_kern_approx,thesis}, where we simply replace the spectral norm with the Frobenius norm since the Frobenius norm upper bounds the spectral norm.
\subsection{Kernel ridge regression}
Exact kernel ridge regression takes $O({N^3})$ since $K$ must be inverted. Suppose $K \approx ZZ^T \defined \hat{K}$, where $Z$ could be found using RFM for example. Running ridge regression with the feature matrix $Z$ just requires computing and inverting the covariance matrix $Z^T Z \in R^{J \times J}$ which takes $\Theta({\max(J^3, NJ^2)})$ time. \Cref{prop:ridge_error} quantifies the error between the regressor obtained from $K$ and the one from $\hat{K}$.

\begin{nprop} \label{prop:ridge_error}
(Proposition 1 of \cite{cortes_kern_approx}) Let $\hat{f}$ denote the regression function returned by kernel ridge regression when using the approximate kernel matrix $\hat{K} \in \mathbb{R}^{N\times M}$, and $f^*$ the function returned when using the exact kernel matrix $K$. Assume that every response $y$ is bounded in absolute value by $M$ for some $0 < M < \infty$. Let $\lambda \defined N\lambda_0 > 0$ be the ridge parameter. Then, the following inequality holds for all $x \in X$:
\begin{equation*}
\begin{split}
|\hat{f}(x) - f^*(x)| & \leq \frac{c M}{\lambda_0^2 N}\| \hat{K} - K\|_2  \\
				& \leq \frac{c M}{\lambda_0^2 N}\| \hat{K} - K\|_F \\
                & = \bigO{\frac{1}{N} \| \hat{K} - K\|_F}
\end{split}
\end{equation*}
\end{nprop}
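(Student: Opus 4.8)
The plan is to read the displayed three-line chain as one substantive input followed by two elementary steps, and to organize the proof accordingly. The first inequality, $|\hat{f}(x) - f^*(x)| \le \frac{cM}{\lambda_0^2 N}\|\hat{K} - K\|_2$, is exactly the spectral-norm estimate of Proposition~1 in \citet{cortes_kern_approx}, so I would invoke it directly after checking that its hypotheses match our setting: the responses are bounded by $M$, the diagonal kernel values satisfy $K(x,x)\le c$ and $\hat{K}(x,x)\le c$, and the ridge parameter $\lambda = N\lambda_0$ is positive. The second inequality is the standard norm comparison $\|A\|_2 \le \|A\|_F$ (the spectral norm is the largest singular value, the Frobenius norm the $\ell_2$ norm of all singular values), applied to $A = \hat{K} - K$ while holding the prefactor $\frac{cM}{\lambda_0^2 N}$ fixed. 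The final line merely records that $c$, $M$, and $\lambda_0$ are constants independent of $N$, so that $\frac{cM}{\lambda_0^2 N}\|\hat{K} - K\|_F = \bigO{\frac{1}{N}\|\hat{K} - K\|_F}$.

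Because the genuine content is inherited from \citet{cortes_kern_approx}, I would, for completeness, indicate how that spectral bound arises, since this is where all the real work sits. Writing the dual solutions $\alpha^\star = (K + \lambda I)^{-1} y$ and $\hat{\alpha} = (\hat{K} + \lambda I)^{-1} y$, the resolvent identity $(K+\lambda I)^{-1} - (\hat{K} + \lambda I)^{-1} = (K+\lambda I)^{-1}(\hat{K} - K)(\hat{K} + \lambda I)^{-1}$, combined with $\|(K+\lambda I)^{-1}\|_2 \le 1/\lambda$ and $\|(\hat{K} + \lambda I)^{-1}\|_2 \le 1/\lambda$ (both matrices are positive semidefinite, so their shifted inverses have norm at most $1/\lambda$) and $\|y\|_2 \le \sqrt{N}M$, bounds the coefficient perturbation $\|\alpha^\star - \hat{\alpha}\|_2$ by $\frac{M\sqrt{N}}{\lambda^2}\|\hat{K} - K\|_2$. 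The pointwise difference $|\hat{f}(x) - f^*(x)|$ then splits into a term involving $\|\alpha^\star - \hat{\alpha}\|_2$ against the test-point kernel vector and a term involving the perturbation of that kernel vector against $\alpha^\star$; each factor is controlled using $K(x,x)\le c$ and $\hat{K}(x,x)\le c$, and substituting $\lambda = N\lambda_0$ collects the powers of $N$ into the stated $\frac{cM}{\lambda_0^2 N}$ prefactor.

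The main obstacle is thus not in the present statement but in the reuse of \citet{cortes_kern_approx}: one must confirm that their assumptions align with ours and, above all, track carefully how the factor $\lambda = N\lambda_0$ distributes across the two resolvent bounds and the test-point evaluation so that the result lands exactly at the $1/N$ scaling rather than at a different power of $N$. Once that bookkeeping is verified, the Frobenius upgrade via $\|\cdot\|_2 \le \|\cdot\|_F$ and the $\bigO{\cdot}$ absorption of the fixed constants are immediate, completing the chain.
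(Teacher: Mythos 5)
Your proposal is correct and takes essentially the same route as the paper: the paper's entire justification (stated in the preamble of its appendix on kernel approximation impact) is to quote Proposition~1 of \citet{cortes_kern_approx} for the spectral-norm bound, replace $\|\hat{K}-K\|_2$ by $\|\hat{K}-K\|_F$ since the Frobenius norm upper-bounds the spectral norm, and absorb the fixed constants $c$, $M$, $\lambda_0$ into $\bigO{\nicefrac{1}{N}\|\hat{K}-K\|_F}$. Your additional resolvent-identity reconstruction of the cited spectral bound goes beyond what the paper itself supplies, but it is consistent with the standard proof and does not change the approach.
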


\subsection{Kernel SVM}
Kernel SVM regression takes $O({N^3})$ using $K$ since $K$ must be inverted. Again suppose $K \approx ZZ^T \defined \hat{K}$. Then, training a linear SVM via dual-coordinate decent on $Z$ has time complexity $\bigO{NJ\log \rho}$, where $\rho$ is the optimization tolerance \cite{lin_svm_train}.

\begin{nprop} \label{prop:svm_error}
(Proposition 2 of \cite{cortes_kern_approx}) Let $\hat{f}$ denote the hypothesis returned by SVM when using the approximate kernel matrix $\hat{K}$, $f^*$ the hypothesis returned when using the exact kernel matrix $K$, and $C_0$ be the penalty for SVM. Then, the following inequality holds for all $x \in X$:
\begin{equation*}
\begin{split}
|\hat{f}(x) - f^*(x)| & \leq \sqrt{2} c^{\frac{3}{4}} C_0 \| \hat{K} - K\|_2^{\frac{1}{4}} \left [1 + \frac{\| \hat{K} - K\|_2^{\frac{1}{4}}}{4c} \right] \\
					 & \leq \sqrt{2} c^{\frac{3}{4}} C_0 \| \hat{K} - K\|_F^{\frac{1}{4}} \left [1 + \frac{\| \hat{K} - K\|_F^{\frac{1}{4}}}{4c} \right]. \\
                     & = \bigO{\| \hat{K} - K\|_F^{\frac{1}{2}}}.
\end{split}
\end{equation*}
\end{nprop}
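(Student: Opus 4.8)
The plan is to establish the displayed chain of inequalities link by link, importing its one substantive inequality from prior work and supplying only elementary arguments for the rest. The leftmost inequality
\[
|\hat f(x) - f^*(x)| \le \sqrt{2}\,c^{3/4} C_0\,\|\hat K - K\|_2^{1/4}\left[1 + \frac{\|\hat K - K\|_2^{1/4}}{4c}\right]
\]
is exactly Proposition 2 of \citet{cortes_kern_approx}, which applies under the standing boundedness assumption $K(x,x), \hat K(x,x) \le c$ on $X$ (here $c = 1$ for the Gaussian kernel). At a high level this is a stability estimate for the SVM quadratic program: the dual variables are confined to the box $[0, C_0]^N$, the objective's quadratic form is governed by the Gram matrix, and a perturbation of the Gram matrix of operator-norm size $\|\hat K - K\|_2$ propagates to a perturbation of the learned predictor evaluated at a fixed $x$; taking square roots twice in this propagation produces both the quarter power and the $c^{3/4}$ prefactor. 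I would cite this result directly rather than reproduce it.

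Second, I would convert the spectral-norm bound into the Frobenius-norm bound that appears in the middle of the chain. The only fact needed is the universal matrix inequality $\|A\|_2 \le \|A\|_F$. Since the right-hand side above is a sum of the increasing functions $t \mapsto t^{1/4}$ and $t \mapsto t^{1/2}$ of $t = \|\hat K - K\|_2 \ge 0$, it is monotone in that argument, so replacing every occurrence of $\|\hat K - K\|_2$ by the larger quantity $\|\hat K - K\|_F$ can only increase the bound. This is precisely the norm swap flagged in the preamble to \cref{A:downstream}, and it is the only adaptation of the cited result to our setting.

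Finally, I would read off the order. Expanding the bracket gives, up to the constant $\sqrt{2}\,c^{3/4} C_0$, the expression $\|\hat K - K\|_F^{1/4} + (4c)^{-1}\|\hat K - K\|_F^{1/2}$, whence the bound is $O(\|\hat K - K\|_F^{1/2})$, matching the SVM row of \cref{tab:impact_frob_norm}. I do not expect any genuine obstacle: all of the analytic weight sits in the imported Proposition 2, and what remains is the norm inequality plus elementary asymptotics. The one point requiring care is the final order statement, since for very small $\|\hat K - K\|_F$ the quarter-power term formally dominates; the reported half-power rate is the relevant leading behavior in the regime where the Frobenius error is bounded away from zero, and keeping track of which term controls the bound is the only bookkeeping that must be done correctly.
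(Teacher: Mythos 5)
Your proposal takes essentially the same route as the paper: the paper's entire argument is to quote Proposition 2 of \cite{cortes_kern_approx} for the spectral-norm bound and then, as stated in the preamble of \cref{A:downstream}, replace $\| \hat{K} - K\|_2$ by $\| \hat{K} - K\|_F$ using $\|A\|_2 \leq \|A\|_F$ and monotonicity, exactly as you do. Your closing caveat is also well taken and is glossed over in the paper: since the expanded bound is $\|\hat{K}-K\|_F^{1/4} + (4c)^{-1}\|\hat{K}-K\|_F^{1/2}$, the stated $\bigO{\|\hat{K}-K\|_F^{1/2}}$ rate is the correct reading only when the Frobenius error is bounded away from zero, the quarter-power term being dominant as the error tends to zero.
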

\subsection{Kernel PCA}
We follow \cite{thesis} to understand the effect matrix approximation has on kernel PCA. For a more in-depth analysis, see pg. 92-98 of \cite{thesis}. Without loss of generality, we assume the data are mean zero.   

Let $\Phi(\cdot)$ be the unique feature map such that $k(x, y) = \inprod{\Phi(x)}{\Phi(y)}$. Let the feature covariance matrix be denoted as $\Sigma_{\Phi} \coloneqq \Phi(X_N) \Phi(X_N)^T$, where $\Phi(X_N) \defined \left [\Phi(x_1) \cdots \Phi(x_n) \right]$. Since the rank of $\Sigma_{\Phi}$ is at most $N$, let $v_i$ $1 \leq i \leq N$ be the $N$ singular vectors of $\Sigma_{\Phi}$. For certain kernels, e.g., the RBF kernel, the $v_i$ are infinite dimensional. However, the projection of $\Phi(x)$ onto each $v_i$ is tractable to compute via the kernel trick: 
\begin{equation}
\Phi(x)^T v_i = \Phi(x) \frac{\Phi(X_N)u_i}{\sqrt{\sigma_i}} = \frac{k_x^T u_i}{\sqrt{\sigma_i}},
\end{equation}
where $k_x \defined (K(x_1, x), \cdots, K(x_N, x))$ and $u_i$ is the ith singular vector of $K$ with associated eigenvalue $\sigma_i$. Often, the goal is to project $\Phi(x)$ onto the first $l$ eigenvectors of $\Sigma_{\Phi}$ for dimensionality reduction. To analyze the error of the projection, let $P_{V_l}$ be defined as the subspace $V_l$ spanned by the top $l$ eigenvectors of $\Sigma_{\Phi}$. Then, the \emph{average empirical residual} $R_l(K)$ of a kernel matrix $K$ is defined as,
\begin{equation}
\begin{split}
R_l(K) & \defined \frac{1}{N}\sum_{n=1}^N \| \Phi(x_n) \|^2 - \frac{1}{N}\sum_{n=1}^N \|P_{V_l}(\Phi(x_n))\|^2 \\
			& = \sum_{i > l} \sigma_i
\end{split}
\end{equation}
$R_l(K)$ is simply the spectral error of a low-rank decomposition of $\Sigma_{\Phi}$ using the SVD. If we instead use $\hat{K}$ for the eigendecomposition, the following proposition bounds the difference between $R_l(K)$ and $R_l(\hat{K})$.

\begin{nprop} (Proposition 5.4 of \cite{thesis}) For $R_l(K)$ and $R_l(\hat{K})$ defined as above,
\begin{equation*}
\begin{split}
|R_l(K) - R_l(\hat{K})| & \leq \left(1 - \frac{l}{N}\right) \|K - \hat{K} \|_2 \\
						& \leq \left(1 - \frac{l}{N}\right) \|K - \hat{K} \|_F.
\end{split}
\end{equation*}
\end{nprop}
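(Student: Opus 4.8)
The plan is to recognize that, by the identity established just above the statement, the residual $R_l$ is (up to the $\tfrac{1}{N}$ normalization appearing in its definition) the sum of the $N-l$ smallest eigenvalues of the kernel matrix, so the whole proposition reduces to a standard eigenvalue-perturbation bound. Writing $\lambda_1(M) \ge \cdots \ge \lambda_N(M)$ for the sorted eigenvalues of a symmetric matrix $M$, and using that $\Sigma_\Phi = \Phi(X_N)\Phi(X_N)^T$ and $K = \Phi(X_N)^T\Phi(X_N)$ share their nonzero spectrum, I would first record
\[
R_l(M) = \frac{1}{N}\sum_{i=l+1}^N \lambda_i(M), \qquad M \in \{K,\hat K\},
\]
which is exactly the displayed identity $R_l = \sum_{i>l}\sigma_i$ once the $\tfrac1N$ from the first line of the definition of $R_l$ is absorbed into the $\sigma_i$. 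Subtracting then gives
\[
R_l(K) - R_l(\hat K) = \frac{1}{N}\sum_{i=l+1}^N \big(\lambda_i(K) - \lambda_i(\hat K)\big).
\]

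Next I would take absolute values, apply the triangle inequality, and invoke Weyl's inequality: since $K$ and $\hat K$ are symmetric (indeed positive semidefinite Gram matrices), their sorted eigenvalues satisfy $|\lambda_i(K) - \lambda_i(\hat K)| \le \|K - \hat K\|_2$ for every $i$. Substituting this termwise and observing that the index set $\{l+1,\dots,N\}$ has exactly $N-l$ elements yields
\[
|R_l(K) - R_l(\hat K)| \le \frac{1}{N}\sum_{i=l+1}^N \|K - \hat K\|_2 = \frac{N-l}{N}\,\|K - \hat K\|_2 = \left(1 - \frac{l}{N}\right)\|K - \hat K\|_2,
\]
which is the first inequality. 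The second is then immediate from the elementary fact that the spectral norm is dominated by the Frobenius norm, $\|K - \hat K\|_2 \le \|K - \hat K\|_F$ (the largest singular value is at most the root-sum-of-squares of all singular values), completing the chain.

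There is no deep obstacle here: the argument is a few lines of perturbation theory, and the only thing requiring genuine care is the bookkeeping that produces the \emph{exact} constant $1 - l/N$. Specifically, I would ensure Weyl's bound is summed over precisely the $N-l$ tail indices $i=l+1,\dots,N$ (not all $N$), and that the $\tfrac1N$ normalization from the definition of $R_l$ is tracked throughout, since it is exactly what converts the count $N-l$ into the ratio $1-l/N$; a looser estimate that bounds the tail sum by $N\|K-\hat K\|_2$ or drops the normalization would fail to recover the stated factor. I would also flag at the outset that the symmetry (positive semidefiniteness) of $K$ and $\hat K$ is what licenses applying the per-eigenvalue Weyl estimate to their real, sorted eigenvalues.
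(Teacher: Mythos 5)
Your proof is correct, but it is worth noting that the paper itself never proves this statement: in \cref{A:downstream} the authors import the spectral-norm inequality wholesale as Proposition~5.4 of \cite{thesis}, and their only original contribution to the displayed chain is the final observation that $\|K-\hat{K}\|_2 \leq \|K-\hat{K}\|_F$, stated once at the top of the appendix ("we simply replace the spectral norm with the Frobenius norm since the Frobenius norm upper bounds the spectral norm"). What you have done is supply the underlying perturbation argument that the citation hides: identify $R_l$ with the normalized tail eigenvalue sum $\frac{1}{N}\sum_{i>l}\lambda_i$, apply Weyl's inequality termwise to the $N-l$ tail indices, and track the $\frac{1}{N}$ normalization so that the count $N-l$ becomes exactly the factor $1-\frac{l}{N}$. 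This is the standard route (and almost certainly the content of the thesis's own proof), so the two approaches are not mathematically different in substance; yours simply makes the argument self-contained, and your flagged bookkeeping is indeed the one delicate point --- the paper's displayed identity $R_l = \sum_{i>l}\sigma_i$ is only consistent with the stated constant if the $\frac{1}{N}$ is absorbed into the $\sigma_i$, exactly as you note, since otherwise Weyl would produce $(N-l)\|K-\hat{K}\|_2$ rather than $\left(1-\frac{l}{N}\right)\|K-\hat{K}\|_2$.
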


\section{Additional Experiments} \label{A:add_experiments}
As stated in \cref{sec:experiments}, our method may be applied on top of other random feature methods. In particular, many previous works have reduced the number of random features needed for a given level of approximation by sampling them from a different distribution (e.g., through importance sampling or Quasi-Monte-Carlo techniques). Regardless of the way the random features are sampled, our method can still be used for compression.

\begin{figure}
\begin{centering}
\includegraphics[width=.8\linewidth]{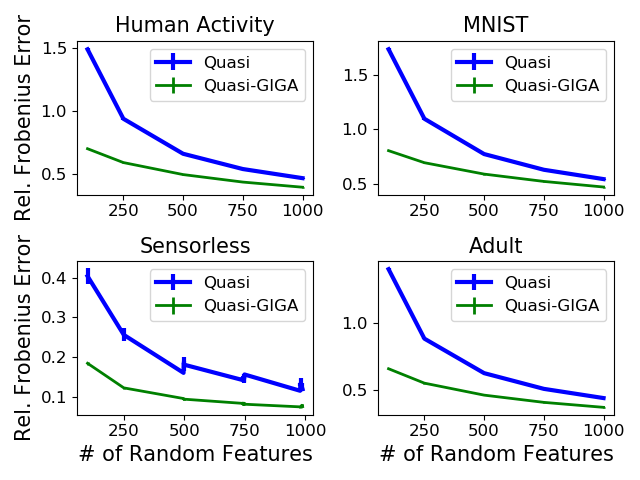}\par
\end{centering}
\vspace{-.1in}
\caption{Kernel matrix approximation errors. Lower is better. Each point denotes the average over 20 simulations and the error bars represent one standard deviation. The HALTON sequence was used to generate the quasi random features.}
\label{fig:kern_errors_quasi}
\vspace{-.1in}
\end{figure}

\begin{figure}
\begin{centering}
\includegraphics[width=.8\linewidth]{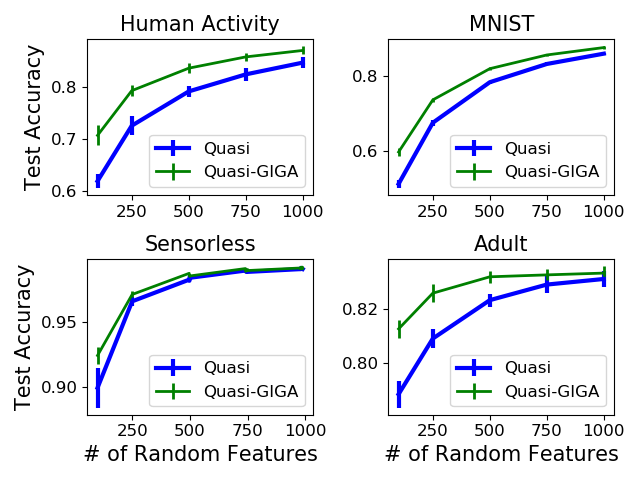}\par
\end{centering}
\vspace{-.1in}
\caption{Classification accuracy. Higher is better. Each point denotes the average over 20 simulations and the error bars represent one standard deviation. The HALTON sequence was used to generate the Quasi random features.}
\label{fig:class_acc_quasi}
\vspace{-.1in}
\end{figure}

To demonstrate this point further, we consider generating random features using Quasi-Monte-Carlo \citep{quasi_rf}. Quasi random features work by generating a sequence of points from a (low-discrepancy) grid of points  in $[0, 1]^p$. Points are sampled from the target random-features distribution $Q$ by applying the inverse CDF of $Q$ on each of these points in the sequence. In \citet{quasi_rf}, the authors showed that generating random features in this way improved performance over the classical random features method provided in \citet{rahimi_rf}.  In \cref{fig:kern_errors_quasi} and \cref{fig:class_acc_quasi}, we see that our method is able to compress the number of quasi random features, which is similar to the behavior in \cref{fig:kern_errors} and \cref{fig:class_acc}. Note that the experimental setup is exactly the same as in \cref{sec:experiments} except that the random features are now generated using Quasi-Monte-Carlo.

\clearpage

\bibliographystyle{abbrvnat} 
\bibliography{references,jhh-references}

\end{document}